\definecolor{myblue}{RGB}{174, 206, 255}%173, 216, 230
\definecolor{mypink}{RGB}{255, 197, 209} %255, 192, 203  253, 195, 241
\definecolor{goodgreen}{RGB}{25, 120, 25}
\definecolor{badred}{RGB}{180, 25, 25}
\definecolor{boxframe}{RGB}{40, 90, 160}
\definecolor{boxtitle}{RGB}{40, 90, 160}
\newtcolorbox{examplebox}[2][]{
  colback=white,
  colframe=boxframe,
  coltitle=white,
  fonttitle=\bfseries\large,
  title={#2},
  sharp corners=south,
  boxrule=0.5mm,
  drop shadow,
  breakable, % Allows box to span pages if necessary
  #1
}
\NewDocumentCommand{\yuji}
{ mO{} }{\textcolor{blue}{\textsuperscript{\textit{Yuji}}{\small[#1]}}}
\NewDocumentCommand{\tx}
{ mO{} }{\textcolor{teal}{\textsuperscript{\textit{Tianxin}}{\small[#1]}}}
\NewDocumentCommand{\heng}
{ mO{} }{\textcolor{red}{\textsuperscript{\textit{Heng}}\textsf{\textbf{\small[#1]}}}}
\NewDocumentCommand{\huan}
{ mO{} }{\textcolor{purple}{\textsuperscript{\textit{Huan}}\textsf{\textbf{\small[#1]}}}}
\theoremstyle{plain}
\newtheorem{theorem}{Theorem}[section]
\newtheorem{proposition}[theorem]{Proposition}
\newtheorem{lemma}[theorem]{Lemma}
\newtheorem{corollary}[theorem]{Corollary}
\theoremstyle{definition}
\theoremstyle{remark}
\newtheorem{remark}[theorem]{Remark}
\icmltitlerunning{Geometric-disentanglement Unlearning}
\begin{document}

\twocolumn[
  \icmltitle{Geometric-disentanglement Unlearning}

  % It is OKAY to include author information, even for blind submissions: the
  % style file will automatically remove it for you unless you've provided
  % the [accepted] option to the icml2026 package.

  % List of affiliations: The first argument should be a (short) identifier you
  % will use later to specify author affiliations Academic affiliations
  % should list Department, University, City, Region, Country Industry
  % affiliations should list Company, City, Region, Country

  % You can specify symbols, otherwise they are numbered in order. Ideally, you
  % should not use this facility. Affiliations will be numbered in order of
  % appearance and this is the preferred way.
  \icmlsetsymbol{equal}{*}
  \begin{icmlauthorlist}
    \icmlauthor{Duo Zhou}{equal,uiuc}
    \icmlauthor{Yuji Zhang}{equal,uiuc}
    \icmlauthor{Tianxin Wei}{uiuc}
    \icmlauthor{Ruizhong Qiu}{uiuc}
    \icmlauthor{Ke Yang}{uiuc}
    \icmlauthor{Xiao Lin}{uiuc}
    \icmlauthor{Cheng Qian}{uiuc} \\
    \icmlauthor{Jingrui He}{uiuc}
    \icmlauthor{Hanghang Tong}{uiuc}
    \icmlauthor{Chengxiang Zhai}{uiuc}
    \icmlauthor{Heng Ji}{uiuc}
    \icmlauthor{Huan Zhang}{uiuc} 
    %\icmlauthor{}{sch}
    %\icmlauthor{}{sch}
  \end{icmlauthorlist}

  \icmlaffiliation{uiuc}{University of Illinois Urbana-Champaign. * for Equal Contribution}
  % \icmlaffiliation{comp}{Company Name, Location, Country}
  % \icmlaffiliation{sch}{School of ZZZ, Institute of WWW, Location, Country}

  \icmlcorrespondingauthor{Duo Zhou, Yuji Zhang}{\{duozhou2, yujiz\}@illinois.edu}
  %\icmlcorrespondingauthor{Firstname2 Lastname2}{first2.last2@www.uk}

  % You may provide any keywords that you find helpful for describing your
  % paper; these are used to populate the "keywords" metadata in the PDF but
  % will not be shown in the document
  \icmlkeywords{Machine Learning, ICML}

  \vskip 0.3in
]

% this must go after the closing bracket ] following \twocolumn[ ...

% This command actually creates the footnote in the first column listing the
% affiliations and the copyright notice. The command takes one argument, which
% is text to display at the start of the footnote. The \icmlEqualContribution
% command is standard text for equal contribution. Remove it (just {}) if you
% do not need this facility.

% Use ONE of the following lines. DO NOT remove the command.
% If you have no special notice, KEEP empty braces:
\printAffiliationsAndNotice{}  % no special notice (required even if empty)
% Or, if applicable, use the standard equal contribution text:
% \printAffiliationsAndNotice{\icmlEqualContribution}
\addtocontents{toc}{\protect\setcounter{tocdepth}{-1}}
\begin{abstract}
Large language models (LLMs) can internalize private or harmful content, motivating unlearning that removes a forget set while preserving retaining knowledge. However, forgetting updates often cause collateral degradation on retaining knowledge, creating a persistent trade-off. Existing LLM unlearning methods are often heuristic, and other theoretical approaches rely on offline feature constructions that do not capture update-time forget-retain interaction in LLMs.
To address this limitation, we aim to develop an LLM unlearning method that reduces the forget-retain trade-off with theoretical guarantees.
We take a first-principles view by formalizing ``no side effects'' as local retain invariance under small parameter updates, and prove an equivalence under optimizer-induced geometry: the retain loss is locally invariant if and only if the update direction is orthogonal to the subspace spanned by retain gradients. 
Based on the insight, we propose Geometric-disentanglement Unlearning (GU), a lightweight and theoretically grounded projection that can be plug-and-play to existing gradient-based unlearning methods to mitigate forget-retain side effects. Experiments on TOFU, MUSE, and WMDP-cyber show that GU strengthens forgetting while reducing retain drift. When added to SimNPO, it achieves up to 62\% improved forgetting Extraction Strength (ES) and 31\% higher retain ES. We open-sourced our code in \url{https://github.com/Lemutisme/Geometric-Unlearning}.

\end{abstract}

\section{Introduction}

Large language models (LLMs) learn broad knowledge from massive corpora~\citep{touvron2023llama,grattafiori2024llama, wolf-etal-2020-transformers}, but this strength also creates deployment risk: models can internalize private or harmful content that later must be removed~\citep{carlini2021extracting,zhang2025atomic,li2024wmdp, zhang2024knowledge,  zhang-etal-2025-law}. Machine unlearning aims to modify a trained model so that the influence of a forget set is erased while performance on the retain set needs to be preserved~\citep{cao2015towards,bourtoule2021machine,ginart2019making,graves2021amnesiac}. In practice, however, updates that improve forgetting often degrade behavior on retaining data, revealing a persistent tradeoff between effective forgetting and retaining knowledge fidelity~\citep{openunlearning,le2025survey,chen-yang-2023-unlearn, yao2024machine}.

% Some machine unlearning methods investigate the collateral harm on retaining knowledge by calibrating the shared representation of forgetting and retaining set, while their construction of retaining and forgetting space relies on label space, which do not apply to LLM unlearning~\citep{bourtoule2021machine,ginart2019making,sendera2025semu, zhang2023review,bourtoule2021machine}. 

% Some machine unlearning methods alleviate the trade-off based on label space disentanglement, which do not apply to LLM unlearning~\citep{bourtoule2021machine,ginart2019making,sendera2025semu, zhang2023review,bourtoule2021machine}.

Some LLM unlearning methods attempt to mitigate the trade-off based on heuristic assumptions~\citep{liu2024large} about why side effects arise, for example, attributing them to entanglement measured by embedding similarity~\citep{microedit2024}, or relying on post-training heuristics to reduce harm to retained knowledge without a formal, testable specification~\citep{dong-etal-2025-undial,ji2024reversing, NPO_zhang2024negative}. 
% machine unlearning methods
% While helpful in some cases, these strategies are often offline training-required~\citep{bourtoule2021machine,ginart2019making,sendera2025semu}, computationally heavy~\citep{zhang2023review,bourtoule2021machine}.
Consequently, existing LLM unlearning approaches tend to lack theoretical guarantees for controlling the retain-forget trade-off, which could leave models either insufficiently scrubbed of harmful content or overly degraded on retained knowledge, increasing deployment risk.

To address this, some machine unlearning methods study the forget-retain trade-off through theoretical analysis. However, they often perform offline calibration of the overlap representation between forgetting and retaining data, such as label-space constructions of forget and retain features.
These designs neither apply to LLMs nor capture update-time collateral damage to retained knowledge~\citep{bourtoule2021machine,ginart2019making,sendera2025semu, zhang2023review,bourtoule2021machine}.
As a result, their applicability to reducing the forget-retain trade-off in LLM unlearning is limited, where principled, update-time calibration with theoretical guarantees remains an open challenge.

Motivated by this gap, we seek a simple and theoretically grounded LLM unlearning method that can reduce the forgetting-retaining trade-off.
The central question is: exactly under what conditions does a forgetting update cause side effects on retaining knowledge during unlearning, and can those effects be avoided with theoretical guarantees?
In fact, the ``no side effect'' condition yields a concrete standard for ``retain-invariant'' updates that do not affect the retain set during model updating. 
This motivates our first-principles objective: optimize the forgetting loss while keeping the retaining loss unchanged, enforced as a local ``retain-invariance'' requirement at update time. 
Rather than presupposing a representation of entanglement in embeddings or parameters, we work in the optimizer-induced geometry that characterizes the actual parameter update, capturing the true interaction between forgetting and retaining knowledge under preconditioned optimization like Adam~\citep{kingma2014adam}.
Concretely, we characterize which update directions satisfy retain-invariance via a first-order analysis of the retain loss change under small parameter updates.

% We start from the first principle by characterizing which update directions leave the retain loss locally unchanged: a first-order analysis of the local change of the retain loss under small parameter updates during model training. 

This analysis yields a concrete and testable account of retain-forget interaction: the component of an update that is responsible for first-order harm on retained data. We prove a crisp equivalence: the retain loss is locally invariant if and only if the update direction is orthogonal, under the optimizer's geometry, to the subspace spanned by retain gradients. This characterization links disentanglement to orthogonality with the retain-gradient subspace. Accordingly, the trade-off arises from the tangential component of a forgetting update within this subspace, which perturbs the retain loss, and a retain-invariant forgetting update should therefore remove this tangential component.

% \yuji{We need to summarize a strong shared limitation of previous methods. ---transition needed---Our main contribution is the optimizer-induced geometry.}

Motivated by this insight, we introduce Geometric-disentanglement Unlearning (GU). GU constructs the orthogonal complement of the retain-gradient subspace and projects each forgetting update into this complement before applying it, preserving only the normal component that leaves the retain loss unchanged while removing the tangential component responsible for side effects. We show that, under a standard trust-region constraint, the projected direction most aligned with the raw forgetting gradient is optimal, achieving the steepest descent progress subject to local retain invariance. Additionally, from an optimization viewpoint, we derive the optimal joint update direction that balances the retain and forget gradients.
% Motivated by this, we introduce Geometric-disentanglement Unlearning (GU). GU constructs the orthogonal complement of the retain-gradient subspace and projects forgetting updates into that complement before applying them, preserving only the normal component that leaves the retain loss unchanged and removing the tangential interaction to reduce side effects. 
% We show that, under a standard trust-region budget, the projected direction most aligned with the raw forgetting gradient is optimal and delivers the steepest descent progress while maintaining local invariance on the retain knowledge. 
% In other words, the effectiveness of forgetting is ensured by alignment with the original forgetting signal, and retain-invariance is ensured by orthogonality to the retain subspace. 
In addition, from an optimization perspective, we derive the optimal joint update direction of retain and forget gradients.

Built on a simple and sound theoretical guarantee, GU integrates easily into existing gradient-based unlearning pipelines: without altering core objectives or requiring additional regularizers, only conducting orthogonal projection from forget to retain gradients induced by \textit{existing} optimizer geometry, \textit{which to the best of our knowledge is the first plug-and-play LLM unlearning method with theoretical guarantees on updating-time trade-off alleviation}. 
% Prior subspace unlearning relies on explicitly building Euclidean projection operators, such as via SVD, which scales poorly for LLMs and also ignores optimizer-preconditioned geometry~\citep{hoang2024learn,chen2024unsc,sendera2025semu}.
Our experiments show that GU achieves stronger forgetting with smaller drift on the retain set, consistent with the theoretical link between reduced entanglement and orthogonality-based retain-invariance. Specifically, across three benchmarks using SimNPO~\citep{fan2024simplicity_simnpo}, recognized as the SOTA method~\citep{openunlearning},  adding our geometry-disentanglement unlearning yields up to 62\% improved forgetting Extraction Strength, 31\% higher retention Extraction Strength, and 8\% higher model utility, and 60\% higher MIA-closeness; on MUSE it improves Extraction Strength~\citep{carlini2021extracting} for Unlearning by 46\%, boosts retained ROUGE by 17\%, and reduces privacy-leak magnitude by 14\%; and on WMDP-cyber it lowers hazardous accuracy by 0.36\% without harming MMLU.

Taken together, adopting orthogonality to the retain gradient subspace as an explicit design principle provides a simple yet effective unified theoretical and practical framework for effective unlearning with controlled side effects. Our contributions are threefold:

$\bullet$ We formalize and leverage a theoretically sound equivalence that local retain invariance matches orthogonality to the retain-gradient subspace under optimizer geometry, thereby making side effects formally testable.

$\bullet$ We introduce Geometric-disentanglement Unlearning, a plug-and-play projection that provides a simple and lightweight theoretical and practical framework for unlearning with controlled side effects.

$\bullet$ Across three well-known benchmarks of TOFU, MUSE, and WMDP, GU strengthens forgetting while preserving or improving downstream performance.

\section{LLM Unlearning Preliminaries}
\label{sec:preliminaries}

% \yuji{Our method outline: 1. We have a model originally trained on forget and retain sets and it will be used for next training steps. 2. During training, the input are pairs of \{forget samples, retain samples\}. 3. The training objectives include gradient ascent on forget samples and gradient descent on retain samples. And the gradient fields from two sets may distract each other. 4. Therefore, our main contribution focuses on constraining the gradient ascending on the forget sample to the direction aligning less with retained knowledge, including a weighted constraint (from embedding space) on the ascending gradient and a retain-gradient-orthogonal projection (on parameter space) of the ascending gradient. Will rewrite the method from scratch.}

% \subsection{Notation}

% Let $D_f$ and $D_r$ denote the forget and retain sets, respectively. Fix model $\pi_\theta$'s parameters $\theta$ and a scalar loss $\mathcal{L}(\cdot;\theta)$. For a sample $x$, denote by $\nabla_\theta \mathcal{L}(x)$ its per-sample parameter gradient. For an input $x$ with sequence length $T$ and hidden size $d_h$, let $H_\theta(x)\in\R^{T\times d_h}$ be the last-layer hidden states.

%\subsection{}

% \paragraph{Task.}
% Given a forget set $D_f$ and a retain set $D_r$,  after being trained on $D_f$ and $D_r$, a trained model $\pi_\theta$ is obtained, then the goal of unlearning
% is to obtain model parameters $\theta$ such that the influence of $D_f$ on model behavior is removed (or
% significantly reduced) while preserving performance on $D_r$ and downstream tasks.

\paragraph{Problem Definition.}
Let $\pi_\theta$ be the target model with $\theta\in\mathbb{R}^p$ denote the parameters of the model $\pi_{\theta}$, and $\pi_{\mathrm{ref}}$ be a reference model trained on a dataset $D$.
Real-world data may contain private or harmful samples. Let $D_f \subseteq D$
denote the forget subset whose influence must be removed, and define the retain set {$D_r = D \setminus D_f$}
. Starting from $\pi_{\mathrm{ref}}$, we continue
training to obtain our model $\pi_{\theta}$. Our objective is for $\pi_{\theta}$ to behave
as if $D_f$ had never been used, which is to say, to match the behavior of a model trained
from scratch on $D_r$. In principle, the ideal approach is full retraining on
$D_r$. However, in practice, this is often intractable due to heavy costs.
A common unlearning practice performs a bi-objective update at each step~\citep{maini2024tofu,openunlearning,NPO_zhang2024negative}. One samples a pair $\{x_f, x_r\}$ with $x_f \sim D_f$ and $x_r \sim D_r$. The update applies {forget loss such as gradient ascent} on a forget objective evaluated at $x_f$ and gradient descent on a retain objective evaluated at $x_r$. The intent is to forget information associated with $x_f$ while preventing unintended harm to $x_r$. We now make the two objectives explicit.
Generally, for both the forget and retain training strategies, there are many viable choices. Taking forget loss as an example, we consider the following instantiations.
\textit{Token-level NLL:} $\ell_f(x_f;\theta) \equiv$ sequence-averaged cross-entropy on $x_f$
(with a sign conventionally chosen for ascent/descent as needed).
\textit{Preference ratios (e.g., SimNPO~\citep{fan2024simplicity_simnpo}/ NPO~\citep{NPO_zhang2024negative}/DPO~\citep{rafailov2023dpo}):} use log-likelihood ratios against a frozen reference model
$\pi_{\mathrm{ref}}$ to penalize the originally preferred response and/or promote an alternative.
\textit{Calibration-based variants (e.g., CEU~\citep{yang2025u}/UNDIAL~\citep{dong-etal-2025-undial}/WGA~\citep{wang2025rethinking}/Sat-Imp~\citep{yang2025exploring}):} reshape logits or labels to
discourage reproducing forget content.
The loss can be instantiated in multiple 
% plug-and-play 
ways, here we adopt the token-level NLL loss in the following practice for simplicity:
\vspace{-1em}
\paragraph{Forget loss.}
For a forget sample $x_f \in D_f$, let $\ell_f(x_f;\theta)$ denote a \emph{forget} objective that encourages the model to reject behaviors tied to $D_f$, optimized via \emph{gradient ascent}: 
\vspace{-0.5em}
\begin{equation}
\label{eq:forget-loss}
L_f(\theta):=-{\mathbb{E}_{x_f\sim D_f}\big[\ell_f(x_f;\theta)\big]}.
\vspace{-0.5em}
\end{equation}
\paragraph{Retain loss.}
Let $\ell_r(x_r;\theta)$ denote a retain objective that encourages the model to prefer behaviors tied to $D_r$, optimized via \emph{gradient descent}:
\begin{equation}
\label{eq:retain-loss}
L_r(\theta):= {\mathbb{E}_{x_r\sim D_r}\big[\ell_r(x_r;\theta)\big]}.
\vspace{-0.5em}
\end{equation}
\paragraph{Empirical objectives.}
We will form training objectives that combine (i) a forget term aggregated over $D_f$
and (ii) the retain-anchor $L_r$.
A generic empirical objective takes the form $\mathcal{L}_{joint}(\theta)=L_f(\theta)
+\alpha\,L_r(\theta)$, where $\alpha\ge 0$ balances forgetting and retention.
% \begin{equation}
% \label{eq:prelim-objective}
% \mathcal{L}(\theta):=\gamma\underbrace{\mathbb{E}_{x_f\sim \hat P_F}\big[\ell_f(x_f;\theta)\big]}_{\text{forget term}}
% +\alpha \underbrace{\mathbb{E}_{x_r\sim \hat P_R}\big[\ell_r(x_r;\theta)\big]}_{\text{retain term}}
% \end{equation}

\section{Methodology}
\label{sec:method}
\begin{figure*}
%\vspace{-1em}
    \centering
    \includegraphics[width=\linewidth]{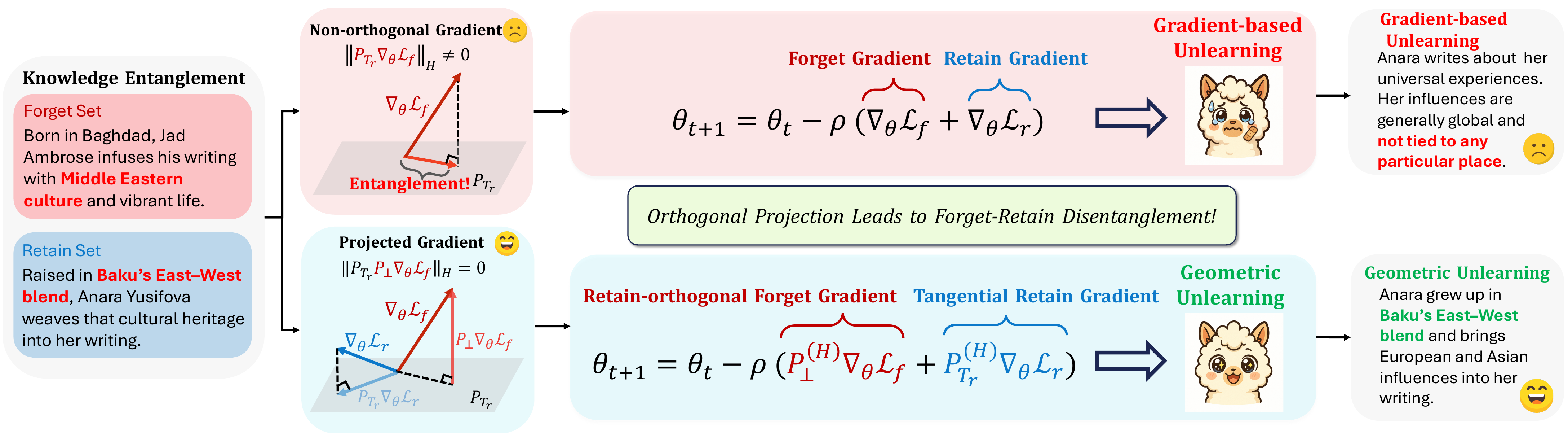}
    %\vspace{-1em}
    \caption{\textbf{GU (bottom) vs. baseline (top).} $P_{\perp}$ is the $H$-orthogonal projector onto the complement of retain tangent subspace $T_r$; $P_{T_r}$ projects onto $T_r$. Without changing training objective or adding regularization, we \emph{route} existing gradients through orthogonal projectors. } 
    %\caption{Without changing the training objective, or adding regularization, our method only projects existing gradients to orthogonal space, outperforming SOTA methods. \huan{In the caption can you explain what is $P_\bot$ and $P_{T_r}$? And briefly, how were they found? You can also change ``other unlearning'' to ``baseline''. Not all unlearning methods use gradients.}}
    \label{fig:gu}
    \vspace{-1em}
\end{figure*}

% In this section, we elaborate on our method GU followed by theoretical guarantees.

\subsection{Geometric-disentanglement Unlearning (GU)}

\paragraph{Featuring Side Effects on Retain Set.}

%Let $\theta\in\mathbb{R}^p$ denote the parameters of the model $\pi_{\theta}$. Let $D_f$ and $D_r$ be the forget and retain sets, and let $L_r:\mathbb{R}^p\to\mathbb{R}$ be the retain loss evaluated on $D_r$. 
In LLM unlearning, updates that improve forgetting on $D_f$ can unintentionally harm knowledge on $D_r$. We attribute this trade-off to retain-forget entanglement. Prior efforts often pursue ``disentanglement'' via heuristics without theoretically deriving a formal, testable specification of what is being disentangled~\citep{liu-etal-2025-disentangling, sendera2025semu}. 
In contrast, to derive a theoretically rigorous forget-retain entanglement representation to mitigate the unlearning tradeoff accurately, we take a different route: starting from the desideratum \emph{forget reduced, retain unchanged}, which manifests during training as a \emph{local retain-invariance} requirement. Rather than presupposing a particular representation of entanglement (e.g., similarity in hidden states or in gradients), we first characterize the update directions that leave $L_r$ locally invariant. This characterization, in turn, induces a theoretically grounded representation of entanglement, namely, the component of an update that is accountable for the harm on $D_r$.
Concretely, during model training on a paired mini-step with forget and retain samples $\{x_f, x_r\}$ at iteration $t$, a parameter update is written as $\theta_{t+1}=\theta_t+\Delta\theta$, where $\Delta\theta\in\mathbb{R}^p$ is the step induced by the current optimization move.
When $\|\Delta\theta\|$ is small, retain loss $L_r$'s local change at $\theta_t$ along $\Delta\theta$ admits the first order approximation:
\begin{equation}
\Delta^{(1)} L_r = \langle \nabla_\theta L_r(x_r, \theta_t), \Delta\theta \rangle_H ,
\label{eq.first_order_approxi}
\end{equation}
where at training iteration $t$, we freeze an optimizer-induced symmetric positive definite (SPD) preconditioner $H_t \succ 0$ and equip $\mathbb{R}^p$ with the inner product $\langle u,v\rangle_{H_t}:=u^\top H_t v$ and norm $\|v\|_{H_t}:=\sqrt{\langle v,v\rangle_{H_t}}$. Within the iteration, including any line-search or trust-region computation, $H_t$ is treated as constant; it may be updated to $H_{t+1}$ at the next step.\footnote{This variable-metric view is standard: adaptive methods such as AdaGrad and Adam act as diagonal preconditioners and thus endow a stepwise SPD metric (see, e.g., \citet{duchi2011adaptive,kingma2014adam}; natural-gradient/K-FAC metrics~\citep{amari2019fisher,martens2015optimizing}). Discussion see Appendix~\ref{sec:discussion_adam}. %The usual $\varepsilon$-stabilization ensures $H_t\succ0$.
}
Under this convention, the metric gradient with respect to $\langle\cdot,\cdot\rangle_{H_t}$ is
$\nabla^{H_t} L_r(x_r,\theta_t):=H_t^{-1}\nabla L_r(x_r,\theta_t)$,
and the first-order change along an update $\Delta\theta$ is
$\Delta^{(1)}L_r=\langle \nabla^{H_t} L_r(x_r,\theta_t),\Delta\theta\rangle_{H_t}$. We may omit the $t$ w.r.t $H$.
%where the symmetric positive definite (SPD) metric $H\succ0$ is induced by the training optimizer (e.g., Adam optimizer)
%\huan{I am unsure whether Adam can be written as a $H$. Could you cite some previous work that also does a similar thing? The transformation applied by Adam is changing each step, so there is no constant linear transformation. Maybe it does not matter here, but in case reviewr asks, citing something will be safer.}.
i.e., under the metric $H$, $\nabla_\theta L_r(x_r, \theta_t)$ is the gradient of retain sample $x_r$.
% Orthogonality is metric-dependent. Modern optimizers apply a preconditioner $P$ (e.g., Adam), which equips parameter space with an implicit geometry. We therefore adopt an SPD metric $H$ and use the $H$-gradient so that $Df(\theta)[\Delta\theta]=\langle\nabla^H f(\theta),\Delta\theta\rangle_H$. 
Our objective is to reduce this harm Eq.~\ref{eq.first_order_approxi}, ideally keeping $L_r$ unchanged, which at the local scale means enforcing $\Delta^{(1)} L_r=0$. To investigate when $\Delta^{(1)} L_r=0$, we propose the following proposition:

\begin{proposition}%[Retain gradient subspace and $H$-orthogonality]
\label{prop:retain-subspace-orth}
Formally, fix $\theta\in\mathbb{R}^p$ and an optimizer-induced symmetric positive definite metric $H\succ0$ with inner product
$\langle u,v\rangle_H := u^\top H v$ and its norm is $\|v\|_H := \sqrt{\langle v,v\rangle_H}$.
For each retain sample $x_r\in D_r$, assume $\ell_r(x_r;\theta)$ be differentiable and define retain gradient
$g(x_r):=\nabla_\theta \ell_r(x_r;\theta)\in\mathbb{R}^p$, 
Let the parameter-dependent retain gradient subspace be
\[
T_r(\theta):=\mathrm{span}\{g(x_r):x_r\in D_r\}\ \subseteq\ \mathbb{R}^p,
\]
and its $H$-orthogonal complement
$T_r(\theta)^\perp:=\{v\in\mathbb{R}^p:\ \langle v,g\rangle_H=0\ \forall g\in T_r(\theta)\}$.
For any finite collection $(x_{r_i})_{i=1}^m\subset D_r$, define the finite retain loss $L_r(\theta):=\sum_{i=1}^m \ell_r(x_{r_i};\theta)$,
$\nabla_\theta L_r(\theta)=\sum_{i=1}^mg(x_{r_i})\ \in\ T_r(\theta)$.
Then for any update direction $\Delta\theta\in\mathbb{R}^p$, the following are equivalent:

\quad(i) $\Delta\theta\in T_r(\theta)^\perp$. \quad
% (ii) $\langle g,\Delta\theta\rangle_H=0$ for all $g\in T_r(\theta)$.\quad
(ii) 
$\Delta^{(1)} L_r(\theta;\Delta\theta)
:= \langle \nabla_\theta L_r(\theta),\Delta\theta\rangle_H
= 0$ for all  $x_r \in D_r$.
\end{proposition}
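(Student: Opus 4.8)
The plan is to treat this as an elementary fact about orthogonal complements with respect to the fixed $H$-inner product, using only bilinearity of $\langle\cdot,\cdot\rangle_H$ together with the fact that $T_r(\theta)$ is, by definition, spanned by the retain gradients $g(x_r)$. First I would record that, since $H\succ 0$, the form $(u,v)\mapsto u^\top H v$ is a genuine inner product on $\mathbb{R}^p$ (symmetric, bilinear, positive definite), so $T_r(\theta)^\perp$ is a well-defined linear subspace and the usual orthogonality bookkeeping applies. I would also reuse the identity already stated in the proposition, $\nabla_\theta L_r(\theta)=\sum_{i=1}^m g(x_{r_i})$, which exhibits the retain-loss gradient as a finite linear combination of the spanning vectors of $T_r(\theta)$, hence an element of $T_r(\theta)$.

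For (i) $\Rightarrow$ (ii): assume $\Delta\theta\in T_r(\theta)^\perp$, i.e. $\langle \Delta\theta, g\rangle_H=0$ for every $g\in T_r(\theta)$. Because $\nabla_\theta L_r(\theta)=\sum_i g(x_{r_i})\in T_r(\theta)$, bilinearity gives $\langle \nabla_\theta L_r(\theta),\Delta\theta\rangle_H=\sum_i\langle g(x_{r_i}),\Delta\theta\rangle_H=0$. The same argument applies to every finite collection drawn from $D_r$, i.e. to every admissible retain loss, which is exactly (ii).

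For (ii) $\Rightarrow$ (i): read the universal quantifier in (ii) as asserting $\Delta^{(1)}L_r=0$ for every retain loss assembled from samples of $D_r$; in particular it holds for each singleton loss $L_r=\ell_r(x_r;\theta)$, whose gradient is precisely $g(x_r)$, so $\langle g(x_r),\Delta\theta\rangle_H=0$ for all $x_r\in D_r$. By bilinearity this extends to every finite linear combination, so $\langle g,\Delta\theta\rangle_H=0$ for all $g\in\mathrm{span}\{g(x_r):x_r\in D_r\}=T_r(\theta)$, which is the definition of $\Delta\theta\in T_r(\theta)^\perp$.

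The one step I would flag as the real content — everything else being routine manipulation of the bilinear form — is the correct interpretation of the quantifier attached to (ii). Orthogonality of $\Delta\theta$ to a single fixed gradient $\nabla_\theta L_r$ does \emph{not} by itself force orthogonality to the whole subspace $T_r(\theta)$; the reverse implication genuinely relies on $\Delta^{(1)}L_r$ vanishing for all retain samples (equivalently, for a generating family of retain losses), which is how the ``for all $x_r\in D_r$'' in the statement should be read. I would therefore make this reading explicit at the outset, after which both implications follow immediately.
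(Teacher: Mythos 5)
Your proof is correct and follows essentially the same route as the paper's: for (i)$\Rightarrow$(ii) you use that $\nabla_\theta L_r(\theta)$ lies in $T_r(\theta)$ together with the definition of the $H$-orthogonal complement, and for (ii)$\Rightarrow$(i) you specialize to singleton retain losses to recover orthogonality to each $g(x_r)$ and then extend by bilinearity over the span. Your explicit remark about how the quantifier in (ii) must be read (over all retain samples, equivalently over all finite collections) is exactly the reading the paper adopts in its appendix restatement, so there is no gap.
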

i.e., when $L_r$ is locally invariant, $\Delta^{(1)} L_r=0$, if and only if the update direction $\Delta\theta$ is $H$-orthogonal to $T_r$.
This identifies $T_r^\perp$ as a \emph{retain-invariance} subspace for forgetting.
Proof see Appendix~\ref{sec:prooforth}.

\paragraph{Geometric Decomposition.}
As established above, if an update direction is $H$-orthogonal to the retain gradient subspace, then the retain loss $L_r$ is locally unchanged. 
This motivates a \emph{geometric} view: as shown in Fig.~\ref{fig:gu}, for a forget sample $x_f$, decompose its gradient
\begin{equation}
g_f(x_f)=P^{(H)}_{T_r}g_f(x_f)+P^{(H)}_{\perp}g_f(x_f),
\end{equation}
into a \emph{tangential} component $P^{(H)}_{T_r}g_f(x_f)\in T_r$ and a \emph{normal} component $P^{(H)}_{\perp}g_f(x_f)\in T_r^\perp$, where
\begin{equation}
\label{eq:proj-orth}
    P^{(H)}_{T_r}=U(U^\top H U)^{-1}U^\top H,\qquad P^{(H)}_{\perp}=I-P^{(H)}_{T_r},
\end{equation}
where $U=[u_1,\dots,u_k]\in\mathbb{R}^{p\times k}$ is the retain gradients from a small retain mini-batch $B_r\subset D_r$ on selected tensors, spans the retain gradient subspace $T_r=\mathrm{range}(U)$ with $U^\top H U=I$.
$I$ is the $k\times k$ identity matrix.
The normal component $P^{(H)}_{\perp}$ produces no first-order change on $L_r$ (discussion on the choice of retain anchor see Appendix~\ref{sec:discussion_retain}), while the tangential component $P^{(H)}_{T_r}$ captures the interaction with retain updates.
We therefore define retain-forget gradient update entanglement by the magnitude of the tangential component:
\begin{equation}
\operatorname{ent}_H\big(g_f(x_f)\big):=\big\|P^{(H)}_{T_r}g_f(x_f)\big\|_{H},
\label{eq.entanglement}
\end{equation}
which vanishes if and only if $g_f(x_f)\in T_r^\perp$.
In the disentangled case, $P_\perp^{(H)} g_f(x_f)$ yields a direction that is first-order safe for $L_r$.
However, the retain-invariance subspace $T_r^\perp$ contains infinite directions.
{Which} retain-invariance direction should we take under a fixed step budget for local optimal forgetting?
A first-order selection principle can answer this.
Fix the metric $H\succ0$,
%and consider feasible updates that are (i) first-order safe,
%$U^\top H\Delta\theta=0$ (equivalently $\Delta\theta\in T_r^\perp$), and (ii) lie in a trust region
%$\|\Delta\theta\|_H\le1$.
fist-order linearizing the joint objective $\mathcal{L}_\text{joint}(\theta):= L_f(\theta)+\alpha L_r(\theta)$ at $\theta_t$ gives
\[
\Delta^{(1)}\mathcal{L}_\text{joint}(\theta_t;\Delta\theta)
=\langle \nabla^H L_f(\theta_t)+\alpha\nabla^H L_r(\theta_t),\Delta\theta\rangle_H.
\]
Because $\nabla^H L_r(\theta_t)\in T_r$ and $\Delta\theta\in T_r^\perp$, the retain term vanishes:
$\langle \nabla^H L_r(\theta_t),\Delta\theta\rangle_H=0$.
Hence, \emph{within the retain-invariance set}, the steepest first-order change of $\mathcal{L}_\text{joint}$ coincides with that of $L_f$,
and depends only on $g_f:=\nabla^H L_f(\theta_t)$ projected onto $T_r^\perp$.
This leads to the following lemma: %to support why does $P_{\perp}^{(H)} g_f$ yield the largest first-order decrease of the forgetting objective among all retain-safe directions in $T_r^\perp$.

% \paragraph{Steepest safe descent for forgetting.}
% Among all retain-safe directions in $T_r^\perp$, why does $P_{\perp}^{(H)} g_f$ yield the largest
% first-order decrease of the forgetting objective? The next result formalizes this ``steepest feasible'' choice.

\begin{lemma}[Steepest feasible descent under first-order safety]
\label{lem:steepest}
Let $H\succ0$ and let $T_r=\mathrm{range}(U)$ be the retain-gradient subspace
(with respect to the $H$-inner product). Define the feasible set
\begin{align*}
    \mathcal C\ :=&\ \{\Delta\theta\in\mathbb{R}^p:\ U^\top H\Delta\theta=0,\ \|\Delta\theta\|_H\le1\}
\\ =&\ \{v\in T_r^\perp:\ \|v\|_H\le1\}.
\end{align*}
For $g_f:=\nabla_\theta^H L_f(\theta_t)$, the direction achieving the largest first-order decrease of $L_f$ over $\mathcal C$ is
\vspace{-1mm}
\begin{equation*}
    \Delta\theta_f^\star
= \arg\min_{\Delta\theta\in\mathcal C}\ \langle g_f,\Delta\theta\rangle_H
= -\frac{P_\perp^{(H)} g_f}{\|P_\perp^{(H)} g_f\|_H},
\end{equation*}
\vspace{-1mm}
$\Delta\theta_f^\star$ is unique if $P_\perp^{(H)}g_f\neq 0$. Moreover, letting $g_r:=\nabla_\theta^H L_r(\theta_t)\in T_r$, the same $\Delta\theta_f^\star$ also achieves the largest first-order decrease of the joint objective
$\mathcal L_{\rm joint}:=L_f+\alpha L_r$ over $\mathcal C$:
\[
\Delta\theta_f^\star
= \arg\min_{\Delta\theta\in\mathcal C}\ \big\langle g_f+\alpha g_r,\ \Delta\theta\big\rangle_H .
\]
%\vspace{-2em}
\end{lemma}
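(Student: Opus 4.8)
The plan is to turn the constrained linear program into a one-dimensional problem on the unit $H$-sphere of $T_r^\perp$, using the $H$-orthogonal decomposition from Eq.~\ref{eq:proj-orth} together with the Cauchy--Schwarz inequality for the inner product $\langle\cdot,\cdot\rangle_H$. First I would record the two structural facts that make this work. Since $U^\top H\,\Delta\theta=0$ says exactly that $\langle u_i,\Delta\theta\rangle_H=0$ for every column $u_i$ of $U$, and those columns span $T_r$, the feasible set is indeed $\mathcal C=\{v\in T_r^\perp:\|v\|_H\le 1\}$. Next, using $U^\top H U=I$ the projector of Eq.~\ref{eq:proj-orth} becomes $P^{(H)}_{T_r}=UU^\top H$, and I would check its two defining properties: $P^{(H)}_{T_r}x\in\mathrm{range}(U)=T_r$ for all $x$, and $U^\top H\,P^{(H)}_\perp y=U^\top Hy-U^\top HUU^\top Hy=0$, so $P^{(H)}_\perp y\in T_r^\perp$ for all $y$. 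Hence the decomposition $g_f=P^{(H)}_{T_r}g_f+P^{(H)}_\perp g_f$ is $H$-orthogonal and $P^{(H)}_\perp g_f$ itself lies in $T_r^\perp$.

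The core step is then: for any $\Delta\theta\in\mathcal C\subseteq T_r^\perp$ the tangential part contributes nothing, $\langle g_f,\Delta\theta\rangle_H=\langle P^{(H)}_{T_r}g_f,\Delta\theta\rangle_H+\langle P^{(H)}_\perp g_f,\Delta\theta\rangle_H=\langle P^{(H)}_\perp g_f,\Delta\theta\rangle_H$, since $P^{(H)}_{T_r}g_f\in T_r$ is $H$-orthogonal to $\Delta\theta\in T_r^\perp$. Applying Cauchy--Schwarz in the $H$-geometry together with $\|\Delta\theta\|_H\le1$ yields $\langle P^{(H)}_\perp g_f,\Delta\theta\rangle_H\ge-\|P^{(H)}_\perp g_f\|_H\,\|\Delta\theta\|_H\ge-\|P^{(H)}_\perp g_f\|_H$. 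When $P^{(H)}_\perp g_f\neq 0$, equality throughout forces $\Delta\theta$ to be a nonpositive $H$-multiple of $P^{(H)}_\perp g_f$ with $\|\Delta\theta\|_H=1$, i.e. $\Delta\theta=-P^{(H)}_\perp g_f/\|P^{(H)}_\perp g_f\|_H=\Delta\theta_f^\star$; this point is feasible by the previous paragraph and meets the bound, so it is the unique minimizer. If $P^{(H)}_\perp g_f=0$ then $\langle g_f,\Delta\theta\rangle_H\equiv0$ on $\mathcal C$ and the minimizer is not unique, which is precisely the stated caveat. For the joint objective, $g_r=\nabla_\theta^H L_r(\theta_t)\in T_r$ and $\Delta\theta\in T_r^\perp$ give $\langle g_r,\Delta\theta\rangle_H=0$, so $\langle g_f+\alpha g_r,\Delta\theta\rangle_H=\langle g_f,\Delta\theta\rangle_H$ identically on $\mathcal C$; the same $\Delta\theta_f^\star$ therefore minimizes it as well.

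I expect no deep obstacle; the delicate points are purely bookkeeping: verifying from the explicit formula (with $U^\top HU=I$) that $P^{(H)}_\perp$ really is the $H$-orthogonal projector onto $T_r^\perp$, so that the cross term genuinely vanishes and $\Delta\theta_f^\star$ is admissible; invoking Cauchy--Schwarz in the $H$-metric rather than the Euclidean one; and isolating the degenerate case $P^{(H)}_\perp g_f=0$. These are exactly where a careless write-up would slip.
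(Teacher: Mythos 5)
Your proof is correct and follows essentially the same approach as the paper's: decompose $g_f$ into its $H$-orthogonal tangential and normal components, observe that only the normal component $P_\perp^{(H)}g_f$ contributes to the objective over $T_r^\perp$, apply Cauchy--Schwarz in the $H$-metric, and handle the joint objective by noting $g_r\in T_r$ is $H$-orthogonal to all of $\mathcal{C}$. The only difference is expository: you verify the projector identities explicitly from $U^\top H U = I$ and spell out the equality conditions of Cauchy--Schwarz to establish uniqueness, whereas the paper takes these as known, but the argument is the same.
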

Proof see Appendix~\ref{sec:prooflem:steepest} for details. This provides the optimal update step for the total loss $\mathcal{L}_{joint}$:
\begin{equation}
    \label{eq:split-update}
    \theta_{t+1}
=\theta_t-\rho\Big(
\underbrace{P_{\perp}^{(H)} \nabla^H L_f(\theta_t)}_{\text{retain-orthogonal}}
+\underbrace{P_{T_r}^{(H)} \nabla^H L_r(\theta_t)}_{\text{retain-tangent}}
\Big)
%\vspace{-0.5em}
\end{equation}
is first-order optimal for the joint objective under the retain-safety constraint.
%choose the sign of the
%first term according to whether $L_f$ is minimized (use “$-$”) or maximized (use “$+$”).
Under standard $H$-smoothness, the step size $\rho$ can be selected by a trust-region or line-search rule
(see \S~\ref{sec:theory}, Proposition~\ref{prop:safety}, Corollary~\ref{cor:descentLr}).
Noted that our geometric-disentanglement update is a plug-and-play method, and projection touches only selected trainable tensors, making GU architecture-agnostic.
We present the details of Algorithm~\ref{alg:GU} in Appendix~\ref{sec:alg} for the basis calculation and the optimizer update step. \noindent\textbf{Does projection weaken forgetting? No:}
GU removes only the retain-interfering component of the forget update; unless the forget gradient lies entirely in the retain-gradient subspace, the projected direction remains a strict first-order descent direction for the forget loss (see Appendix~\ref{sec:appendix_projection_not_weaken}).

\subsection{Theoretical Guarantees}

%\subsection{Theoretical Guarantees}
\label{sec:theory}

We provide the theoretical guarantees for our method, GU. We have proven that $P^{(H)}_\perp g_f$ is the \emph{steepest} safe direction for $\mathcal{L}_{joint}$ in Lemma~\ref{lem:steepest}, furthermore, we will show $L_r$ is \emph{first-order nonincreasing} (strictly decreasing when $\beta>0$), with second-order drift bounded by smoothness in Prop.~\ref{prop:safety} and its corollary Cor.~\ref{cor:descentLr}).
Then, we show that the composite objective enjoys a \emph{nonpositive} first-order change with an explicit negative lower bound in Prop.~\ref{prop:composite-exact}.
Collectively, these results justify GU as a principled \emph{first-order safe} and \emph{steepest-feasible} unlearning procedure in the optimizer geometry, with explicit stability and robustness margins.

%\paragraph{Notation.}\yuji{Merge the notation naturally to where the concept first occurs.}
% We work in parameter space $\R^p$ with optimizer-induced metric $H=W^\top W\succ 0$ (Adam diagonal Eq.~\eqref{eq:whiten}).
% For any $u,v$, $\langle u,v\rangle_H:=u^\top H v$ and $\|u\|_H:=\sqrt{\langle u,u\rangle_H}$.
% Let $U\in\R^{p\times k}$ be the $H$-orthonormal retain basis ($U^\top H U=I_k$), with projectors $P^{(H)}_{T_r}$ and $P^{(H)}_\perp$ defined in ~\eqref{eq:proj-orth}.
% Denote per-batch gradients $g_f:=\nabla L_f(\theta)$ and $g_r:=\nabla L_r(\theta)$, and the split update in Eq.~\eqref{eq:split-update} with step size $\rho>0$ and balance $\beta\ge 0$.
% We first introduce an Assumption:
% \begin{assumption}[\emph{$L$-smoothness w.r.t.\ $H$}] 
% \label{ass:smooth}
% $L_f,L_r$ are differentiable and their gradients are $L_f,L_r$-Lipschitz under the $H$-norm:
% $\|\nabla L_\bullet(\theta+\Delta)-\nabla L_\bullet(\theta)\|_H \le L_\bullet\|\Delta\|_H$ for $\bullet\in\{f,r\}$.
% \end{assumption}

\paragraph{First-Order Safety and Retain Monotonicity}

The next proposition quantifies, at first order, how this step impacts the retain loss $L_r$:
the normal forget component is first-order neutral to $L_r$, whereas the tangential repair strictly
decreases $L_r$ whenever $g_r\neq0$.

% ---- Proposition ----
\begin{proposition}[First-order safety and retain monotonicity]
\label{prop:safety}
Let $H\succ0$ be SPD and let $T_r\subset\mathbb{R}^p$ denote the retain-gradient subspace w.r.t. the $H$-inner product.
Let $g_r:=\nabla_\theta^H L_r(\theta_t)\in T_r$ and $g_f:=\nabla_\theta^H L_f(\theta_t)$.
WLOG, introduce $\beta \ge 0$. Consider one split step
\begin{equation}
\Delta\theta = -\rho\Big(P^{(H)}_\perp g_f + \beta P^{(H)}_{T_r} g_r\Big)
\quad\text{with}\quad \rho>0,\ \beta\ge0.
%\vspace{-0.2em}
\end{equation}
Then the first-order change of $L_r$ satisfies
\begin{equation}
\Delta^{(1)} L_r
=
\langle g_r,\Delta\theta\rangle_H
=
-\rho\beta\|g_r\|_H^2
\le0.
%\vspace{-0.1em}
\end{equation}
If $\beta=0$ the step is \emph{first-order neutral} to $L_r$, and if $\beta>0, g_r\neq0$
it is \emph{first-order strictly decreasing}.
\end{proposition}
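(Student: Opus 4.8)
The plan is to reduce the claim to a two-term inner-product computation, after recording the two defining properties of the $H$-orthogonal projectors $P^{(H)}_{T_r},P^{(H)}_{\perp}$ from Eq.~\ref{eq:proj-orth}. First I would pin down the metric bookkeeping: by the first-order expansion used in this section (Eq.~\ref{eq.first_order_approxi}), the change of the retain loss along any step $\Delta\theta$ is $\Delta^{(1)}L_r=\langle\nabla^{H}_\theta L_r(\theta_t),\Delta\theta\rangle_H=\langle g_r,\Delta\theta\rangle_H$, which is exactly the middle expression in the statement; so everything reduces to evaluating $\langle g_r,\Delta\theta\rangle_H$ for the prescribed split step.

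Next I would record the two facts I need, both immediate from $P^{(H)}_{T_r}=U(U^\top H U)^{-1}U^\top H$ and $P^{(H)}_\perp=I-P^{(H)}_{T_r}$ (with $U^\top HU=I$ these are $UU^\top H$ and $I-UU^\top H$): (a) $P^{(H)}_{T_r}$ restricts to the identity on $T_r=\mathrm{range}(U)$, since $v=Uc$ gives $P^{(H)}_{T_r}v=U(U^\top HU)^{-1}(U^\top HU)c=Uc=v$; and (b) $P^{(H)}_\perp w$ is $H$-orthogonal to every element of $T_r$, since for $v=Uc\in T_r$ one has $\langle P^{(H)}_\perp w,v\rangle_H=(w-P^{(H)}_{T_r}w)^\top HUc=w^\top HUc-w^\top HU(U^\top HU)^{-1}(U^\top HU)c=0$. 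Equivalently, $P^{(H)}_{T_r}$ is idempotent and $H$-self-adjoint with range $T_r$, so $(P^{(H)}_{T_r},P^{(H)}_\perp)$ realizes the $H$-orthogonal splitting $\mathbb{R}^p=T_r\oplus T_r^\perp$.

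With these in hand the computation is one line. Substituting the step and expanding bilinearly,
\[
\langle g_r,\Delta\theta\rangle_H=-\rho\,\langle g_r,\,P^{(H)}_\perp g_f\rangle_H\;-\;\rho\beta\,\langle g_r,\,P^{(H)}_{T_r}g_r\rangle_H .
\]
The first term vanishes by (b), using $g_r\in T_r$ (this membership is exactly the displayed fact $\nabla_\theta L_r(\theta)\in T_r(\theta)$ from Prop.~\ref{prop:retain-subspace-orth}, read in the $H$-geometry so that $g_r$, the $H$-gradient of $L_r$, lies in the retain-gradient subspace). In the second term (a) gives $P^{(H)}_{T_r}g_r=g_r$, hence $\langle g_r,P^{(H)}_{T_r}g_r\rangle_H=\|g_r\|_H^2$. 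Therefore $\Delta^{(1)}L_r=-\rho\beta\|g_r\|_H^2$, which is $\le 0$ because $\rho>0$ and $\beta\ge 0$; it equals $0$ exactly when $\beta=0$ (first-order neutral) and is $<0$ when $\beta>0$ and $g_r\neq 0$ (first-order strictly decreasing), giving the full statement.

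There is no genuinely hard step here; the only thing to be careful about is working consistently in the $H$-geometry. It is essential that ``tangential'' and ``normal'' are $H$-orthogonal rather than Euclidean-orthogonal, because that is precisely what forces the cross term $\langle g_r,P^{(H)}_\perp g_f\rangle_H$ to vanish for \emph{any} forget gradient $g_f$, i.e., the normal forget component is exactly first-order neutral to $L_r$ irrespective of how $g_f$ is oriented. A secondary point worth stating explicitly, to avoid circularity, is the membership $g_r\in T_r$: one should define $T_r$ as the span of the (metric) retain gradients so that $g_r=P^{(H)}_{T_r}g_r$ holds by construction, matching the convention already used in Lemma~\ref{lem:steepest}.
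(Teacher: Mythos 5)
Your proof is correct and follows essentially the same route as the paper's: expand $\langle g_r,\Delta\theta\rangle_H$ bilinearly, kill the cross term $\langle g_r,P^{(H)}_\perp g_f\rangle_H$ by $H$-orthogonality of $T_r$ and $T_r^\perp$, and use $P^{(H)}_{T_r}g_r=g_r$ (from $g_r\in T_r$) to reduce the remaining term to $\|g_r\|_H^2$. The only cosmetic difference is that you verify the projector identities directly from the matrix formula for $P^{(H)}_{T_r}$, whereas the paper simply cites them ``by construction''; the substance is identical.
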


Proposition~\ref{prop:safety} establishes the \emph{first-order} effect of one split step on the retain loss:
$\Delta^{(1)}L_r=\langle g_r,\Delta\theta\rangle_H=-\rho\beta\|g_r\|_H^2\le0$.
To convert this into an \emph{actual} decrease of $L_r(\theta)$, we invoke the $H$-geometry version of the descent lemma under Lipschitz $H$-gradient, and combine it with the $H$-orthogonal decomposition of the step:

\begin{corollary}[Descent guarantee for $L_r$ under $H$-smoothness]
\label{cor:descentLr}
Assume the $H$-gradient $\nabla_\theta^H L_r$ is $L_r^{(H)}$-Lipschitz under $\|\cdot\|_H$, i.e.,
$\|\nabla_\theta^H L_r(\theta+\Delta)-\nabla_\theta^H L_r(\theta)\|_H\le L_r^{(H)}\|\Delta\|_H$.
Let the split step be $\Delta\theta=-\rho\big(P_\perp^{(H)}g_f+\beta P_{T_r}^{(H)}g_r\big)$ with $\rho>0$ and $\beta\ge0$, where $g_r:=\nabla_\theta^H L_r(\theta)$ and $g_f:=\nabla_\theta^H L_f(\theta)$.
Then
%\vspace{-1.0em}
\begin{align}
L_r(\theta+\Delta\theta)
\ &\le\
L_r(\theta)\ -\ \rho \beta \|g_r\|_H^2 
\\ &+\ \frac{L_r^{(H)}}{2} \rho^2\Big( \|P_\perp^{(H)}g_f\|_H^2+\beta^2\|g_r\|_H^2 \Big) \notag.
\end{align}
In particular, if $0\ <\ \rho\ <\ \frac{2 \beta \|g_r\|_H^2}{ L_r^{(H)}\big( \|P_\perp^{(H)}g_f\|_H^2+\beta^2\|g_r\|_H^2 \big)}$ , then $L_r(\theta+\Delta\theta)<L_r(\theta)$ (strict descent whenever $\beta>0$ and $g_r\neq0$).
For $\beta=0$,
\begin{align}
    L_r(\theta+\Delta\theta)\ &\le\ L_r(\theta)\ +\ \frac{L_r^{(H)}}{2} \rho^2 \|P_\perp^{(H)}g_f\|_H^2 \notag
\\ &=\ L_r(\theta)+O(\rho^2),
\end{align}
recovering the neutral first-order case with only second-order drift.
\end{corollary}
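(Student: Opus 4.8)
The plan is to reduce Corollary~\ref{cor:descentLr} to the descent lemma written in the $H$-geometry and then substitute the first-order quantities already identified in Proposition~\ref{prop:safety}. Concretely, I would (a) prove an $H$-version of the quadratic upper bound for $L_r$, (b) evaluate the linear term $\langle g_r,\Delta\theta\rangle_H$ and the squared step length $\|\Delta\theta\|_H^2$ for the split step, and (c) read off the stated inequality and the step-size condition for strict descent.

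For step (a): along the segment $s\mapsto\theta+s\Delta$ the chain rule gives $\tfrac{d}{ds}L_r(\theta+s\Delta)=\langle\nabla_\theta L_r(\theta+s\Delta),\Delta\rangle$, and since $\langle\nabla^H_\theta f,v\rangle_H=(H^{-1}\nabla f)^\top H v=\nabla f^\top v$ this equals $\langle\nabla^H_\theta L_r(\theta+s\Delta),\Delta\rangle_H$. Integrating from $0$ to $1$, subtracting $\langle\nabla^H_\theta L_r(\theta),\Delta\rangle_H$, and bounding the remainder $\int_0^1\langle\nabla^H_\theta L_r(\theta+s\Delta)-\nabla^H_\theta L_r(\theta),\Delta\rangle_H\,ds$ by Cauchy--Schwarz in the $H$-inner product together with the $L_r^{(H)}$-Lipschitz hypothesis yields $L_r(\theta+\Delta)\le L_r(\theta)+\langle\nabla^H_\theta L_r(\theta),\Delta\rangle_H+\tfrac{L_r^{(H)}}{2}\|\Delta\|_H^2$. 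This is the only genuinely new ingredient; the rest is bookkeeping.

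For step (b): with $\Delta\theta=-\rho\bigl(P^{(H)}_\perp g_f+\beta P^{(H)}_{T_r}g_r\bigr)$, Proposition~\ref{prop:safety} already supplies $\langle g_r,\Delta\theta\rangle_H=-\rho\beta\|g_r\|_H^2$, the point being that $P^{(H)}_\perp g_f\in T_r^\perp$ is $H$-orthogonal to $g_r\in T_r$ and that $P^{(H)}_{T_r}g_r=g_r$. For the squared length, the two summands lie in the $H$-orthogonal subspaces $T_r^\perp$ and $T_r$, so the $H$-Pythagorean identity gives $\|\Delta\theta\|_H^2=\rho^2\bigl(\|P^{(H)}_\perp g_f\|_H^2+\beta^2\|g_r\|_H^2\bigr)$, again using $\|P^{(H)}_{T_r}g_r\|_H=\|g_r\|_H$. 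Substituting these two expressions into the $H$-descent lemma from step (a) is exactly the displayed bound.

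For step (c): viewing the right-hand side as a quadratic in $\rho$, the net change $-\rho\beta\|g_r\|_H^2+\tfrac{L_r^{(H)}}{2}\rho^2\bigl(\|P^{(H)}_\perp g_f\|_H^2+\beta^2\|g_r\|_H^2\bigr)$ is strictly negative precisely for $0<\rho<\tfrac{2\beta\|g_r\|_H^2}{L_r^{(H)}(\|P^{(H)}_\perp g_f\|_H^2+\beta^2\|g_r\|_H^2)}$ whenever $\beta>0$ and $g_r\neq0$, giving strict descent; for $\beta=0$ the linear term drops and only the $O(\rho^2)$ term survives, recovering the neutral case. The main obstacle, such as it is, is being careful in step (a) that ``Lipschitz under $\|\cdot\|_H$'' of the \emph{metric} gradient $\nabla^H_\theta L_r$ (not the Euclidean gradient) is the right hypothesis, and that the cross term collapses to the ordinary directional derivative; once that identity is pinned down, steps (b)--(c) are routine algebra with the $H$-orthogonal decomposition.
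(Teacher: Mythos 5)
Your proposal follows the same route as the paper's own proof: derive the $H$-geometry descent lemma by integrating $\tfrac{d}{ds}L_r(\theta+s\Delta)=\langle\nabla^H_\theta L_r(\theta+s\Delta),\Delta\rangle_H$, bound the remainder via Cauchy--Schwarz and the $L_r^{(H)}$-Lipschitz hypothesis, substitute the linear term from Proposition~\ref{prop:safety} and the $H$-Pythagorean identity for $\|\Delta\theta\|_H^2$, and then read off the step-size condition as a quadratic in $\rho$. The identity $\langle\nabla^H_\theta f,v\rangle_H=\nabla f^\top v$ that you flag as the potential sticking point is exactly what the paper relies on implicitly, and your handling of it is correct.
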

Proof details of Proposition~\ref{prop:safety} and Corollary~\ref{cor:descentLr} in Appendix~\ref{sec:proofsafety}.

% \yuji{This 3.1 section only proves orthogonal leads to loss=0. We need to revise it to loss=0 leads to orthogonal (iff?).}

% For an infinitesimal perturbation $\Delta\theta$, the local first-order change of the retain loss $L_r$ at $\theta$ in direction $\Delta\theta$ is
% \[
% \Delta^{(1)}L_r = \langle \nabla L_r(\theta), \Delta\theta\rangle_H
% = \big\langle P^{(H)}_{T_r}\nabla L_r(\theta), \Delta\theta\big\rangle_H .
% \]
% Where $H\succ0$ is an optimizer-induced metric. %(e.g., for Adam algorithms, $H=W^\top W$ with $W=\mathrm{diag}(1/\sqrt{\hat v+\varepsilon})$).
% Let $U\in\mathbb{R}^{p\times k}$ be an $H$-orthonormal basis of the retain-sensitive tangent subspace $T_r=\mathrm{range}(U)$, i.e., $U^\top H U=I_k$. Hence the following equivalence holds:
% \[
% \forall  g\in T_r:\ \langle g, \Delta\theta\rangle_H=0
% \quad\Longleftrightarrow\quad
% \Delta\theta\in T_r^\perp ,
% \]
% i.e., \emph{the retain anchor is first-order invariant if and only if the update direction is $H$-orthogonal to the retain subspace}.
% This justifies treating $T_r^\perp$ as a safe subspace for forgetting.
% (Statement is local: finite steps incur second-order terms; orthogonality is taken in the same metric $H$ used by the optimizer.)

\paragraph{One-step behavior of the joint objective.}
Having established first-order monotonicity and actual descent for $L_r$, we now analyze the one-step
first-order change of the joint objective $\mathcal L_{\text{joint}}:=L_f+\alpha L_r$ under the same split step.

\begin{proposition}[Exact first-order change of $\mathcal L_{\text{joint}}$]
\label{prop:composite-exact}
Let $H\succ0$, $g_f:=\nabla_\theta^H L_f(\theta)$, $g_r:=\nabla_\theta^H L_r(\theta)\in T_r$, and
$\Delta\theta=-\rho\big(P_\perp^{(H)}g_f+\beta P_{T_r}^{(H)}g_r\big)$ with $\rho>0$, $\beta\ge0$.
Then the first-order change of the joint objective equals
\begin{align}
    \label{eq:joint-first-order-identity}
    \Delta&^{(1)}\mathcal L_{\text{joint}}
    :=\big\langle g_f+\alpha g_r,\Delta\theta\big\rangle_H\\
    =& - \rho\Big( \|P_\perp^{(H)}g_f\|_H^2+\alpha\beta\|g_r\|_H^2+\beta\langle P_{T_r}^{(H)}g_f, g_r\rangle_H \Big). \notag
\end{align}
\end{proposition}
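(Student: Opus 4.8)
The statement is a one-line bilinear expansion, so the plan is a direct computation resting entirely on the structural properties of the $H$-orthogonal projectors $P^{(H)}_{T_r}$ and $P^{(H)}_\perp$ from \eqref{eq:proj-orth}.

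First I would record the facts that do all the work. Write $P:=P^{(H)}_{T_r}=U(U^\top HU)^{-1}U^\top H$ and $P_\perp:=I-P$. Then: (a) both projectors are $H$-self-adjoint, i.e.\ $\langle Pu,v\rangle_H=\langle u,Pv\rangle_H$ and likewise for $P_\perp$, because $HP=HU(U^\top HU)^{-1}U^\top H$ is a symmetric matrix, so $P^\top H=(HP)^\top=HP$, and subtracting from $H$ gives the same for $P_\perp$; (b) $P$ and $P_\perp$ are idempotent and complementary, $P^2=P$, $P_\perp^2=P_\perp$, $PP_\perp=P_\perp P=0$, with $\mathrm{range}(P)=T_r$ and $\mathrm{range}(P_\perp)=T_r^\perp$; (c) by hypothesis $g_r:=\nabla^H_\theta L_r(\theta)\in T_r$ (consistent with Proposition~\ref{prop:retain-subspace-orth}), hence $Pg_r=g_r$ and $P_\perp g_r=0$.

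Next, substitute $\Delta\theta=-\rho\big(P_\perp g_f+\beta P g_r\big)=-\rho\big(P_\perp g_f+\beta g_r\big)$ into $\Delta^{(1)}\mathcal L_{\text{joint}}=\langle g_f+\alpha g_r,\Delta\theta\rangle_H$ and expand the $H$-inner product bilinearly:
\[
\Delta^{(1)}\mathcal L_{\text{joint}}
= -\rho\Big(\langle g_f,P_\perp g_f\rangle_H+\beta\langle g_f,g_r\rangle_H+\alpha\langle g_r,P_\perp g_f\rangle_H+\alpha\beta\langle g_r,g_r\rangle_H\Big).
\]
Then I would evaluate the four terms using (a)--(c): $\langle g_f,P_\perp g_f\rangle_H=\langle P_\perp g_f,P_\perp g_f\rangle_H=\|P_\perp g_f\|_H^2$ by self-adjointness and idempotency; $\langle g_r,P_\perp g_f\rangle_H=\langle P_\perp g_r,g_f\rangle_H=0$ since $P_\perp g_r=0$; $\langle g_r,g_r\rangle_H=\|g_r\|_H^2$; and $\langle g_f,g_r\rangle_H=\langle g_f,Pg_r\rangle_H=\langle Pg_f,g_r\rangle_H=\langle P^{(H)}_{T_r}g_f,g_r\rangle_H$. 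Collecting the surviving terms gives $-\rho\big(\|P^{(H)}_\perp g_f\|_H^2+\alpha\beta\|g_r\|_H^2+\beta\langle P^{(H)}_{T_r}g_f,g_r\rangle_H\big)$, which is exactly \eqref{eq:joint-first-order-identity}.

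There is no real obstacle beyond bookkeeping; the single point needing a line of care is the $H$-self-adjointness in (a) --- the projector $P^{(H)}_{T_r}$ is oblique in the Euclidean metric but orthogonal in the $H$-metric --- since that is what lets us slide $P$ and $P_\perp$ across the inner product and, in particular, rewrite the raw cross term $\langle g_f,g_r\rangle_H$ as the $\langle P^{(H)}_{T_r}g_f,g_r\rangle_H$ that appears in the claim. The same computation also makes transparent why no $\alpha$-weighted ``normal-forget'' cross term survives ($P_\perp g_r=0$), which is precisely the vanishing-retain-term mechanism invoked before Lemma~\ref{lem:steepest}.
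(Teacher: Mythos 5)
Your proof is correct and follows essentially the same route as the paper's: substitute the split step, expand the $H$-inner product bilinearly into the four cross terms, kill the $\langle g_r,P_\perp g_f\rangle_H$ term by $H$-orthogonality, and use $P_{T_r}^{(H)}g_r=g_r$ together with the $H$-self-adjointness of the projectors to rewrite the remaining terms as $\|P_\perp^{(H)}g_f\|_H^2$, $\beta\langle P_{T_r}^{(H)}g_f,g_r\rangle_H$, and $\alpha\beta\|g_r\|_H^2$. The only cosmetic difference is that you simplify $P^{(H)}_{T_r}g_r=g_r$ before expanding and spell out the $H$-self-adjointness check $P^\top H=HP$ explicitly, which the paper leaves implicit in ``$H$-orthogonality.''
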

Proof details see Appendix~\ref{sec:proofcomposite}. In the optimizer-induced metric $H$, we prove that GU performs first-order-safe, steepest-feasible forgetting by projecting onto the retain-orthogonal subspace, guarantees monotone decrease of the retain loss via an explicit stepsize condition, provides an exact one-step decomposition for the joint objective with verifiable nonpositivity conditions, and quantifies retain-forget entanglement by the norm of the tangential $\|P^{(H)}_{T_r}g_f\|_H$.

\vspace{-1em}
\section{Experiments}
\vspace{-0.5em}
\subsection{Experimental Settings}
\begin{table*}[!htbp]
    \centering
    \vspace{-6pt}
    %\caption{Comparison between unlearning objectives on TOFU with retain regularization to stabilize unlearning. $\uparrow/\downarrow$ indicate larger / smaller values are preferable. The top two results are in {bold}\huan{Nothing is in bold on the table, you mean color? What do colors mean here? Here you can write something like ``The top two results are in \colorbox{myblue}{blue} and \colorbox{mypink}{red}''} for each unlearning setup. \huan{List what each abbreviation means in the caption. Like ``(ES Re.: XXXX; ES Un: YYYY)''}}
    \caption{TOFU results comparing unlearning objectives w. and w/o GU. Arrows $\uparrow/\downarrow$ denote that higher/lower is better. Within each block (model scale and deletion rate), the top two entries are shaded: \colorbox{myblue}{blue} for higher-is-better metrics and \colorbox{mypink}{red} for lower-is-better metrics (All blocks with color \colorbox{myblue}{blue} or \colorbox{mypink}{red} means ours improve). Abbreviations: ES Re. = Extraction Strength on the retain split; ES Un. = Extraction Strength on the forget split; Priv. = MIA closeness; MU = composite model utility. “Forget–1\%, 5\%, 10\%” indicates the fraction of TOFU authors deleted. “Vanilla” is the pretrained backbone without TOFU fine-tuning; “fully-finetuned” is trained on the full TOFU corpus. “w.\ GU” denotes the corresponding objective augmented with our geometry module.}
    \vspace{-0.5em}
    \label{tab:main_results}
    \resizebox{0.99\textwidth}{!}{
    \begin{tabular}{ccccc|cccc|cccc}
    \toprule[1.5pt]
    \multirow{2}{*}{Method} & \multicolumn{4}{c|}{Forget-1\%} & \multicolumn{4}{c|}{Forget-5\%} & \multicolumn{4}{c}{Forget-10\%}\\
    \cmidrule(lr){2-5} \cmidrule(lr){6-9} \cmidrule(lr){10-13}
    &ES Re. $\uparrow$ & ES Un. $\downarrow$ & Priv. $\uparrow$ & MU $\uparrow$ &ES Re. $\uparrow$ & ES Un. $\downarrow$ & Priv. $\uparrow$ & MU $\uparrow$ &ES Re. $\uparrow$ & ES Un. $\downarrow$ & Priv. $\uparrow$ & MU $\uparrow$\\
    \midrule[1.5pt]
    \multicolumn{13}{c}{Llama-3.2-1B-Instruct}\\
    \midrule[1.2pt]
    Vanilla & 0.0657 & 0.0692 & 1.0 & 0.5986 & 0.0667 & 0.0634 & 1.0 & 0.5991& 0.0672 & 0.0589 &  1.0 & 0.5911 \\
    fully-finetuned & 0.6483 &0.7431 & 0.0 & 0.5991 & 0.6547  & 0.7271 & 0.0 & 0.5991 & 0.6475 & 0.7062 & 0.0& 0.5991 \\
    \midrule[0.2pt]
    % GradAscent & 0.0785 & 0.0370 & 0.6180 & 0.2714 & 0.0348 & 0.0327} & 0.8634 & 0.0000 & 0.0348 & 0.0325} & 0.9111} & 0.0000\\
    % GradAscent w. GU & 0.1558 & 0.0421} & 0.6598 & 0.4417 & 0.2125 & 0.0327} & 0.6661 & 0.5108 & 0.1531 & 0.0325} & 0.5897 & 0.4798\\
    GradDiff & 0.1347 & 0.0410 & 0.6478 & 0.4170 & 0.2024 & 0.0327 & 0.6619 & 0.5232 & 0.1202 & 0.0325 & 0.5576 & 0.4763\\
    GradDiff w. GU  & \cellcolor{myblue!50}0.1558 & \cellcolor{mypink!1}0.0421 & \cellcolor{myblue!15}0.6598 & \cellcolor{myblue!62}0.4417 & \cellcolor{myblue!20}0.2125 & \cellcolor{mypink!7}0.0327 & \cellcolor{myblue!15}0.6661 & \cellcolor{myblue!33}0.5308 & \cellcolor{myblue!50}0.1531 & \cellcolor{mypink!15}0.0325 & \cellcolor{myblue!34}0.5897 & \cellcolor{myblue!45}0.4798 \\

    %GradDiff w. GU & 0.1558 & 0.0421} & 0.6598 & 0.4417 & 0.2125 & 0.0327} & 0.6661 & 0.5108 & 0.1531 & 0.0325} & 0.5897 & 0.4798\\
    CEU & 0.0875 & 0.0316 & 0.5328 & 0.3666 & 0.0348 & 0.0327 & 0.8855 & 0.0000 & 0.0348 & 0.0325 & 0.9022 & 0.0000\\
    CEU w. GU  & \cellcolor{myblue!100}0.2236 & \cellcolor{mypink!1}0.0328 & \cellcolor{mypink!1}0.5121 & \cellcolor{myblue!100}0.5134 & \cellcolor{myblue!100}0.2798 & \cellcolor{mypink!4}0.0333 & \cellcolor{mypink!2}0.6986 & \cellcolor{myblue!100}0.5635 & \cellcolor{myblue!100}0.4366 & \cellcolor{mypink!15}0.0325 & \cellcolor{mypink!2}0.6598 &\cellcolor{myblue!100} 0.5844 \\
    
    DPO & 0.3391 & 0.1520 & 0.5788 & 0.5071 & 0.2114 & 0.1507 & 0.5065 & 0.0710 & 0.2629 & 0.1826 & 0.4412 & 0.2157\\
    DPO w. GU  & \cellcolor{myblue!50}0.3440 & \cellcolor{mypink!1}0.1545 & \cellcolor{myblue!15}0.5813 & \cellcolor{myblue!15}0.5099 & \cellcolor{myblue!20}0.2243 & \cellcolor{mypink!1}0.1535 & \cellcolor{mypink!1}0.5020 & \cellcolor{myblue!20}0.0922 & \cellcolor{myblue!30}0.2792 & \cellcolor{mypink!25}0.1822 & \cellcolor{myblue!1}0.4411 & \cellcolor{myblue!100}0.3016 \\
    % PDU & 0.0472 & 0.0329 & 0.5525 & 0.2874 & 0.1068 & 0.0744 & 0.7641 & 0.4101 & 0.2491 & 0.1540 & 0.5521 & 0.5258\\
    % RMU & 0.3359 & 0.0291} & 0.5454 & 0.5468 & 0.6522} & 0.0327 & 0.7542 & 0.5800} & 0.6639} & 0.0325 & 0.7041 & 0.5827}\\
    NPO & 0.3071 & 0.0637 & 0.7989 & 0.5482 & 0.1321 & 0.0678 & 0.8954 & 0.4378 & 0.1924 & 0.0742 & 0.9491 & 0.5218\\
    NPO w. GU  & \cellcolor{myblue!56}0.3574 & \cellcolor{mypink!1}0.0670 & \cellcolor{myblue!85}0.9595 & \cellcolor{myblue!27}0.5520 & \cellcolor{mypink!2}0.1191 & \cellcolor{mypink!22}0.0632 & \cellcolor{myblue!100}0.9651 & \cellcolor{myblue!69}0.4623 & \cellcolor{myblue!50}0.2226 & \cellcolor{mypink!2}0.0864 & \cellcolor{mypink!0}0.9172 & \cellcolor{myblue!40}0.5442 \\
    %GU_NPO & 0.3131 & 0.0631 & 0.7961 & 0.5502 & 0.1391 & 0.0674 & 0.8867} & 0.4625 & 0.2232 & 0.0745 & 0.9518} & 0.5217\\
    SatImp & 0.6437 & 0.6183 & 0.5112 & 0.5889 & 0.4948 & 0.4604 & 0.3591 & 0.5682 & 0.4841 & 0.4184 & 0.3804 & 0.5760\\
    SatImp w. GU  & \cellcolor{myblue!44}0.6517 & \cellcolor{mypink!100}0.4855 & \cellcolor{myblue!15}0.5114 & \cellcolor{myblue!28}0.5942 & \cellcolor{myblue!56}0.5494 & \cellcolor{mypink!55}0.3964 & \cellcolor{myblue!26}0.3632 & \cellcolor{myblue!30}0.5724 & \cellcolor{myblue!72}0.5423 & \cellcolor{mypink!40}0.3459 & \cellcolor{myblue!22}0.3850 & \cellcolor{myblue!27}0.5790 \\
    
    SimNPO & 0.6341 & 0.2824 & 0.5482 & 0.5899 & 0.4868 & 0.2072 & 0.4089 & 0.5696 & 0.4636 & 0.1838 & 0.4178 & 0.5781\\
    SimNPO w. GU  & \cellcolor{myblue!1}0.6260 & \cellcolor{mypink!89}0.1204 & \cellcolor{myblue!70}0.7414 & \cellcolor{myblue!27}0.5954 & \cellcolor{myblue!38}0.5272 & \cellcolor{mypink!57}0.1140 & \cellcolor{myblue!100}0.6540 & \cellcolor{myblue!31}0.5770 & \cellcolor{myblue!83}0.5350 & \cellcolor{mypink!65}0.1099 & \cellcolor{myblue!100}0.6163 & \cellcolor{myblue!35}0.5884 \\
    
    UNDIAL & 0.3462 & 0.0539 & 0.7994 & 0.5512 & 0.2391 & 0.0524 & 0.5697 & 0.5567 & 0.2631 & 0.0463 & 0.5246 & 0.5645\\
    UNDIAL w. GU  & \cellcolor{myblue!100}0.5900 & \cellcolor{mypink!1}0.0565 & \cellcolor{mypink!1}0.7962 & \cellcolor{myblue!37}0.5886 & \cellcolor{myblue!100}0.6613 & \cellcolor{mypink!30}0.0458 & \cellcolor{myblue!26}0.5888 & \cellcolor{myblue!47}0.5972 & \cellcolor{myblue!100}0.6888 & \cellcolor{mypink!20}0.0395 & \cellcolor{myblue!42}0.5889 & \cellcolor{myblue!74}0.6026 \\
    
    WGA & 0.5455 & 0.0516 & 0.9194 & 0.5872 & 0.4891 & 0.0335 & 0.7152 & 0.5836 & 0.4474 & 0.0325 & 0.6685 & 0.5825\\
    WGA w. GU  & \cellcolor{myblue!74}0.6180 & \cellcolor{mypink!0}0.0884 & \cellcolor{mypink!1}0.9119 & \cellcolor{myblue!28}0.5963 & \cellcolor{myblue!35}0.5212 & \cellcolor{mypink!1}0.0377 & \cellcolor{myblue!13}0.7168 & \cellcolor{mypink!1}0.5773 & \cellcolor{myblue!41}0.4828 &\cellcolor{mypink!15} 0.0325 & \cellcolor{myblue!36}0.6752 & \cellcolor{myblue!24}0.5862 \\
    \midrule[1.2pt]
    \multicolumn{13}{c}{Llama-3.2-3B-Instruct}\\
    \midrule[1.2pt]

    Vanilla & 0.0689& 0.0647 & 1.0 & 0.0649 & 0.0694 & 0.0656 & 1.0 & 0.6594 & 0.0645 & 0.0665 & 1.0 & 0.6623\\
    fully-finetuned  & 0.8763  & 0.9201 & 0.0 & 0.6660 & 0.8459 & 0.8869 & 0.0 & 0.6660 &  0.8730 &  0.8904  & 0.0 &  0.6660\\
    \midrule[0.2pt]
    % GradAscent & 0.0747 & 0.0373 & 0.6190 & 0.2071 & 0.0348 & 0.0327} & 0.8199 & 0.0000 & 0.0348 & 0.0325} & 0.9035} & 0.0000\\
    % GradAscent w. GU & 0.2578 & 0.1124 & 0.6625 & 0.5677 & 0.3123 & 0.0327} & 0.6145 & 0.5822 & 0.2074 & 0.0325} & 0.5842 & 0.6041\\

    GradDiff & 0.1241 & 0.0425 & 0.6712 & 0.3635 & 0.2273 & 0.0327 & 0.5974 & 0.5031 & 0.1808 & 0.0325 & 0.6340 & 0.5720\\
    GradDiff w. GU  & \cellcolor{myblue!100}0.2578 & \cellcolor{mypink!1}0.0436 & \cellcolor{mypink!1}0.6625 & \cellcolor{myblue!100}0.5677 & \cellcolor{myblue!80}0.3123 & \cellcolor{mypink!10}0.0327 & \cellcolor{myblue!13}0.6145 & \cellcolor{myblue!43}0.5822 & \cellcolor{myblue!40}0.2074 &\cellcolor{mypink!10} 0.0325 & \cellcolor{myblue!0}0.5842 & \cellcolor{myblue!45}0.6041 \\

    %GradDiff w. GU & 0.2578 & 0.1124 & 0.6625 & 0.5677 & 0.3123 & 0.0327} & 0.6145 & 0.5822 & 0.2074 & 0.0325} & 0.5842 & 0.6041\\
    CEU & 0.1692 & 0.0297 & 0.4265 & 0.5585 & 0.0348 & 0.0327 & 0.8963 & 0.0000 & 0.0348 & 0.0325 & 0.8627 & 0.0000\\
    CEU w. GU  & \cellcolor{myblue!100}0.3046 & \cellcolor{mypink!29}0.0291 & \cellcolor{myblue!14}0.4288 & \cellcolor{myblue!70}0.6159 & \cellcolor{myblue!100}0.3411 & 0.0332 & \cellcolor{mypink!2}0.6924&\cellcolor{myblue!100} 0.6255 & \cellcolor{myblue!100}0.5568 & \cellcolor{mypink!7}0.0325 & \cellcolor{mypink!2}0.6350 & \cellcolor{myblue!100} 0.6672 \\
    DPO & 0.5017 & 0.3143 & 0.6057 & 0.6273 & 0.3098 & 0.1973 & 0.4564 & 0.1023 & 0.3866 & 0.2598 & 0.4264 & 0.3283\\
    DPO w. GU  & \cellcolor{myblue!40}0.5035 & \cellcolor{mypink!60}0.2569 & \cellcolor{mypink!1}0.6097 & \cellcolor{myblue!0}0.6266 & \cellcolor{myblue!50}0.3233 & \cellcolor{mypink!1}0.2058 & \cellcolor{mypink!1}0.4544 & \cellcolor{myblue!100}0.1867 & \cellcolor{myblue!40}0.4104 & \cellcolor{mypink!14}0.2564 & \cellcolor{myblue!24}0.4270 & \cellcolor{myblue!100}0.4526 \\

    % PDU & 0.0420 & 0.0291} & 0.5150 & 0.0684 & 0.0348 & 0.0327 & 0.8673} & 0.0000 & 0.1704 & 0.1124 & 0.6007 & 0.5545\\
    % RMU & 0.6406 & 0.0298 & 0.5734 & 0.6494} & 0.6391} & 0.0327 & 0.7069 & 0.6692} & 0.6472} & 0.0325 & 0.6615 & 0.6718}\\
    NPO & 0.4129 & 0.0865 & 0.7520 & 0.6356 & 0.1454 & 0.0595 & 0.8653 & 0.4828 & 0.1389 & 0.0600 & 0.8746 & 0.5329\\
    NPO w. GU  & \cellcolor{myblue!66}0.4941 & \cellcolor{mypink!1}0.0947 & \cellcolor{myblue!100}0.9232 & \cellcolor{myblue!27}0.6499 & \cellcolor{myblue!0}0.1329 & \cellcolor{myblue!0}0.0632 & \cellcolor{myblue!66}0.9526 & \cellcolor{myblue!0}0.4326 & \cellcolor{myblue!70}0.1936 & \cellcolor{myblue!0}0.0704 & \cellcolor{myblue!82}0.9604 & \cellcolor{myblue!40}0.5805 \\
    SatImp & 0.7926 & 0.7171 & 0.5836 & 0.6429 & 0.6147 & 0.6210 & 0.3546 & 0.6370 & 0.5739 & 0.5426 & 0.3884 & 0.6457\\
    SatImp w. GU  & \cellcolor{myblue!34}0.8134 & \cellcolor{mypink!77}0.5973 & \cellcolor{myblue!23}0.5852 & \cellcolor{myblue!28}0.6480 & \cellcolor{myblue!46}0.6529 & \cellcolor{mypink!70}0.4744 & \cellcolor{myblue!26}0.3598 & \cellcolor{myblue!30}0.6441 & \cellcolor{myblue!72}0.6200 & \cellcolor{mypink!77}0.4386 & \cellcolor{myblue!22}0.3934 & \cellcolor{myblue!0}0.6362 \\
    SimNPO & 0.7490 & 0.3896 & 0.6349 & 0.6417 & 0.6068 & 0.2470 & 0.4046 & 0.6342 & 0.5682 & 0.2032 & 0.4511 & 0.6439\\
    SimNPO w. GU  & \cellcolor{myblue!56}0.7781 & \cellcolor{mypink!89}0.1747 & \cellcolor{myblue!70}0.7842 & \cellcolor{myblue!27}0.6447 & \cellcolor{myblue!98}0.7977 & \cellcolor{mypink!100}0.1089 & \cellcolor{myblue!100}0.6467 & \cellcolor{myblue!31}0.6670 & \cellcolor{myblue!83}0.6224 & \cellcolor{mypink!65}0.1207 & \cellcolor{myblue!100}0.6745 & \cellcolor{myblue!35}0.6488 \\
    UNDIAL & 0.4396 & 0.0658 & 0.8748 & 0.6468 & 0.3242 & 0.0465 & 0.6397 & 0.6463 & 0.3538 & 0.0416 & 0.5833 & 0.6550\\
    UNDIAL w. GU  & \cellcolor{myblue!100}0.6996 & \cellcolor{mypink!24}0.0619 & \cellcolor{myblue!1}0.8736 & \cellcolor{myblue!57}0.6805 & \cellcolor{myblue!100}0.7641 & \cellcolor{mypink!30}0.0424 & \cellcolor{myblue!26}0.6616 & \cellcolor{myblue!47}0.6935 & \cellcolor{myblue!100}0.7869 & \cellcolor{mypink!21}0.0396 & \cellcolor{myblue!22}0.6226 & \cellcolor{myblue!44}0.6992 \\
    WGA & 0.6827 & 0.0818 & 0.9365 & 0.6522 & 0.6060 & 0.0327 & 0.6975 & 0.6417 & 0.6427 & 0.0341 & 0.6516 & 0.6497\\
    WGA w. GU  & \cellcolor{myblue!74}0.7440 & \cellcolor{mypink!1}0.1226 & \cellcolor{myblue!18}0.9425 & \cellcolor{myblue!28}0.6543 & \cellcolor{myblue!35}0.6163 & \cellcolor{mypink!7}0.0327 & \cellcolor{mypink!1}0.6751 & \cellcolor{myblue!32}0.6419 & \cellcolor{myblue!0}0.6425 & \cellcolor{mypink!15}0.0334 & \cellcolor{myblue!16}0.6544 & \cellcolor{myblue!0}0.6451 \\
    \midrule[1.2pt]
    \multicolumn{13}{c}{Llama-3.1-8B-Instruct}\\
    \midrule[1.2pt]

    Vanilla & 0.0674 & 0.0645& 1.0 & 0.6176 & 0.0697 & 0.0741 & 1.0 & 0.6322 & 0.0645 & 0.0650 & 1.0 & 0.6461\\
    fully-finetuned  & 0.9247  & 0.9767 & 0.0 & 0.6276 & 0.9238 & 0.9719 & 0.0 & 0.6276 & 0.9463 & 0.9789 & 0.0 & 0.6276\\
    \midrule[0.2pt]
    % GradAscent & 0.2626 & 0.0725 & 0.5614 & 0.5208 & 0.0348 & 0.0327} & 0.8814} & 0.0000 & 0.0348 & 0.0325} & 0.7899} & 0.0000\\
    % GradAscent w. GU & 0.3449 & 0.0756 & 0.5475 & 0.5659 & 0.4639 & 0.0327} & 0.6402 & 0.6276 & 0.3408 & 0.0325} & 0.6233 & 0.5771\\

    GradDiff & 0.3072 & 0.0764 & 0.5497 & 0.5481 & 0.2897 & 0.0327 & 0.5666 & 0.5890 & 0.3098 & 0.0325 & 0.5638 & 0.5713\\
    GradDiff w. GU  & \cellcolor{myblue!40}0.3449 & \cellcolor{mypink!84}0.0756 & \cellcolor{mypink!1}0.5475 & \cellcolor{myblue!62}0.5659 & \cellcolor{myblue!100}0.4639 & \cellcolor{mypink!7} 0.0327 & \cellcolor{myblue!45}0.6402 & \cellcolor{myblue!43}0.6276 & \cellcolor{myblue!40}0.3408 & \cellcolor{mypink!7}0.0325 & \cellcolor{myblue!34}0.6233 & \cellcolor{myblue!45}0.5771 \\
    CEU & 0.1500 & 0.0291 & 0.5311 & 0.5465 & 0.0348 & 0.0327 & 0.9180 & 0.0000 & 0.0348 & 0.0325 & 0.8689 & 0.0000\\
    CEU w. GU  & \cellcolor{myblue!100}0.3050 & \cellcolor{myblue!29}0.0291 & \cellcolor{mypink!1}0.5091 & \cellcolor{myblue!100}0.6144 & \cellcolor{myblue!100}0.4470 & \cellcolor{mypink!7} 0.0327 & \cellcolor{mypink!0}0.6370 &\cellcolor{myblue!100} 0.6411 & \cellcolor{myblue!100}0.6987 & \cellcolor{mypink!7}0.0325 & \cellcolor{mypink!0}0.6856 & \cellcolor{myblue!100}0.6773 \\
    DPO & 0.5852 & 0.2854 & 0.5593 & 0.5774 & 0.4902 & 0.2476 & 0.4607 & 0.2390 & 0.7038 & 0.3366 & 0.4451 & 0.3281\\
    DPO w. GU  & \cellcolor{myblue!30}0.5840 & \cellcolor{mypink!60}0.2445 & \cellcolor{myblue!25}0.5702 & \cellcolor{myblue!37}0.5813 & \cellcolor{myblue!80}0.5512 & \cellcolor{mypink!14}0.2329 & \cellcolor{myblue!24}0.4661 & \cellcolor{myblue!100}0.3523 & \cellcolor{myblue!60}0.7557 & \cellcolor{mypink!14}0.3188 & \cellcolor{myblue!24}0.4565 & \cellcolor{myblue!100}0.4481 \\

    %PDU & 0.0418 & 0.0291} & 0.5314 & 0.2058 & 0.1842 & 0.0831 & 0.8158 & 0.5034 & 0.4142 & 0.1954 & 0.5919 & 0.6226\\
    %RMU & 0.6726} & 0.0309} & 0.6210 & 0.5360} & 0.6862} & 0.0327 & 0.6478 & 0.5715} & 0.6834} & 0.0325 & 0.6244 & 0.5551}\\
    NPO & 0.3861 & 0.0811 & 0.7948 & 0.5717 & 0.2071 & 0.0648 & 0.7440 & 0.5839 & 0.2435 & 0.0684 & 0.7516 & 0.6104\\
    NPO w. GU  & \cellcolor{myblue!56}0.4006 & \cellcolor{mypink!28}0.0818 & \cellcolor{mypink!0}0.7972 & \cellcolor{myblue!27}0.5842 & \cellcolor{myblue!80}0.2642 & \cellcolor{myblue!1}0.0664 & \cellcolor{myblue!16}0.7447 & \cellcolor{myblue!69}0.6173 & \cellcolor{myblue!100}0.3476 & \cellcolor{myblue!0}0.0708 & \cellcolor{mypink!0}0.7396 & \cellcolor{myblue!0}0.6095 \\

    SatImp & 0.9505 & 0.9037 & 0.5186 & 0.6269 & 0.7967 & 0.7391 & 0.3641 & 0.6013 & 0.6794 & 0.6436 & 0.3846 & 0.6265\\
    SatImp w. GU  & \cellcolor{myblue!1}0.9342 & \cellcolor{mypink!17}0.7251 & \cellcolor{myblue!23}0.5248 & \cellcolor{myblue!28}0.6326 & \cellcolor{myblue!56}0.8120 & \cellcolor{mypink!69}0.4872 & \cellcolor{myblue!26}0.3782 & \cellcolor{myblue!30}0.6163 & \cellcolor{myblue!42}0.6978 & \cellcolor{mypink!100}0.4360 & \cellcolor{myblue!22}0.3891 & \cellcolor{myblue!0}0.6118 \\

    SimNPO & 0.8256 & 0.3101 & 0.6178 & 0.6270 & 0.7814 & 0.2529 & 0.4762 & 0.6040 & 0.6530 & 0.2110 & 0.4789 & 0.6029\\
    SimNPO w. GU  & \cellcolor{myblue!16}0.8284 & \cellcolor{mypink!89}0.1177 & \cellcolor{myblue!70}0.7810 & \cellcolor{myblue!2}0.6269 & \cellcolor{myblue!58}0.8067 & \cellcolor{mypink!100}0.1425 & \cellcolor{myblue!100}0.7424 & \cellcolor{myblue!31}0.6227 & \cellcolor{myblue!83}0.7103 & \cellcolor{mypink!65}0.1140 & \cellcolor{myblue!100}0.7027 & \cellcolor{myblue!35}0.6519 \\

    UNDIAL & 0.5679 & 0.0683 & 0.7964 & 0.7089 & 0.5453 & 0.0506 & 0.5812 & 0.6781 & 0.6213 & 0.0495 & 0.5364 & 0.6934\\
    UNDIAL w. GU  & \cellcolor{myblue!100}0.7520 & \cellcolor{mypink!24}0.0671 & \cellcolor{myblue!21}0.8115 & \cellcolor{myblue!57}0.7257 & \cellcolor{myblue!100}0.9191 & \cellcolor{mypink!30}0.0477 & \cellcolor{myblue!26}0.6029 & \cellcolor{myblue!47}0.6860 & \cellcolor{myblue!100}0.9349 & \cellcolor{mypink!24}0.0468 & \cellcolor{myblue!22}0.5361 & \cellcolor{myblue!0}0.6802 \\

    WGA & 0.7644 & 0.0745 & 0.7161 & 0.6258 & 0.7299 & 0.0339 & 0.6199 & 0.6256 & 0.6560 & 0.0339 & 0.6552 & 0.6331\\
    WGA w. GU  & \cellcolor{myblue!44}0.7915 & \cellcolor{mypink!12}0.0785 & \cellcolor{myblue!18}0.7513 & \cellcolor{myblue!28}0.6297 & \cellcolor{myblue!45}0.7577 & \cellcolor{mypink!22}0.0327 & \cellcolor{mypink!0}0.6184 & \cellcolor{myblue!32}0.6425 & \cellcolor{myblue!91}0.7122 & \cellcolor{mypink!7}0.0325 & \cellcolor{mypink!6}0.6225 & \cellcolor{myblue!0}0.6024 \\

    %\midrule[1.2pt]
    \bottomrule[1.5pt]
    %\vspace{-3em}
    \end{tabular}
    }
\end{table*}

\paragraph{Datasets}
\vspace{-0.5em}
We evaluate our method on the \textbf{Open Unlearning}~\cite{openunlearning} benchmark suite, focusing primarily on \textbf{TOFU}~\citep{openunlearning}, a fine-grained benchmark with 200 fictitious author profiles, each containing 20 QA pairs. For fair comparison, we adopt the Llama-3 backbones (1B, 3B, 8B) \citep{grattafiori2024llama} provided by the suite and follow the official \emph{scaling splits}, varying the forget set size (\texttt{forget01}, \texttt{forget05}, \texttt{forget10}) to examine scalability. In addition, we report results on \textbf{MUSE}~\citep{shi2025muse}, which evaluates memorization and unlearning of books and news articles through verbatim reproduction, question answering, and membership inference, and on \textbf{WMDP}~\citep{li2024wmdp}, an alignment-oriented benchmark of 3,668 multiple-choice questions across hazardous domains (biosecurity, cybersecurity, chemical security) assessing whether models can forget dangerous capabilities while retaining general performance. For MUSE and WMDP, we report results on Llama-2-7B \citep{touvron2023llama} and zephyr-7b \citep{tunstall2024zephyr} to provide a more comprehensive evaluation. %\footnote{These models are chosen to ensure they have learned the target knowledge, enabling fair comparison of unlearning effectiveness, as provided and recommended in the Open Unlearning suite.}.

\begin{figure*}[!htbp]
    \centering
    \includegraphics[width=\linewidth]{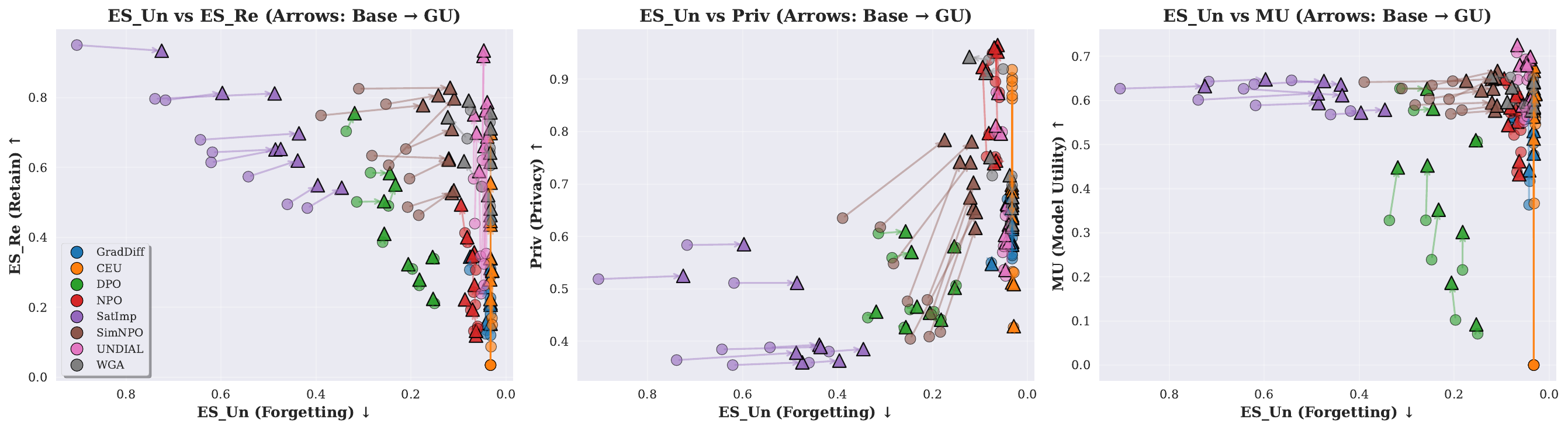}
    \vspace{-2em}
    \caption{{We visualize forgetting quality (ES Un: lower for better) against retained knowledge (ES Re), privacy (Priv), and model utility (MU) for eight unlearning baselines on TOFU. ES Re, Priv, and MU are metrics of higher for better. Circles denote baseline outputs, triangles denote results of GU, and arrows indicate the shift from Base $\rightarrow$ GU. Across all three panels, GU pushes methods toward the Pareto-optimal corner (upper-right), reducing the trade-off between forgetting and retaining.}}
    \label{fig:pareto_TOFU}
    \vspace{-5mm}
\end{figure*}

\vspace{-1em}
\paragraph{Evaluation Metrics}
Following \citet{openunlearning,yang2025exploring}, we evaluate unlearning performance along four axes. \textbf{Forgetting} is measured by \emph{Extraction Strength on the forget set} (ES,Un.; $\downarrow$), which quantifies residual regurgitation by testing how easily the model can reconstruct target facts under constrained prompts, directly probing whether the intended knowledge has been removed. \textbf{Retention} is assessed by \emph{Extraction Strength~\citep{carlini2021extracting} on the retain set} (ES,Re.; $\uparrow$), monitoring collateral damage to preserved knowledge; for MUSE and WMDP, this is complemented with \emph{ROUGE RE}, which measures generation quality on retained knowledge-based QA pairs. \textbf{Privacy} is captured by resistance to membership inference: on TOFU we report \emph{MIA closeness} ($\uparrow$), which evaluates the similarity between unlearned and retain-only models across multiple MIA variants, while on MUSE and WMDP we use \emph{Privacy Leakage (Priv. Leak.; $\downarrow$)}, which directly tests whether membership information from the forget set can still be inferred. Finally, \textbf{Utility} ($\uparrow$) captures post-unlearning usefulness: TOFU reports a composite model-utility score combining probability, ROUGE, and Truth Ratio across retain and factual knowledge sets, while MUSE and WMDP report ROUGE on retained QA tasks. Together, these axes disentangle \emph{what was forgotten} (ES,Un.), \emph{what was preserved} (ES,Re., ROUGE RE), \emph{whether leakage is controlled} (MIA, Priv. Leak.), and \emph{whether the model remains useful} (Utility).{A detailed introduction of the metrics refer to \textbf{Open Unlearning} \citep{openunlearning}.}

\begin{table*}[!htbp]
    \centering
     %\vspace{-15pt}
    \caption{MUSE benchmark results for Llama-2-7b-hf on Books and News splits and WMDP benchmark results for zephyr-7b-beta on cyber split. $\downarrow$ indicates smaller values are better, while $\uparrow$ indicates larger values are better. %\huan{Add some bold or color? It's hard to read a lot of numbers}\huan{What is ``fully-finetuned'' it reads pretty strange and we never explain it.}
    }
    \vspace{-0.5em}
    \label{tab:muse_benchmark}
    \resizebox{\textwidth}{!}{
    \begin{tabular}{lcccccccc}
        \toprule[1.5pt]
        \multirow{2}{*}{\textbf{Method}} & \multicolumn{3}{c}{\textbf{MUSE Books}} & \multicolumn{3}{c}{\textbf{ MUSE News}} & \multicolumn{2}{c}{\textbf{ WMDP cyber}}\\
        \cmidrule(lr){2-4} \cmidrule(lr){5-7} \cmidrule(lr){8-9}
        & \begin{tabular}[c]{@{}c@{}}ES Un. $\downarrow$\end{tabular} & 
        \begin{tabular}[c]{@{}c@{}}Priv. Leak. $\rightarrow 0$\end{tabular} & 
        \begin{tabular}[c]{@{}c@{}}ROUGE Re. $\uparrow$\end{tabular} & 
        \begin{tabular}[c]{@{}c@{}}ES Un. $\downarrow$\end{tabular} & 
        \begin{tabular}[c]{@{}c@{}}Priv.  Leak. $\rightarrow 0$\end{tabular} & 
        \begin{tabular}[c]{@{}c@{}}ROUGE Re. $\uparrow$\end{tabular} &
        \begin{tabular}[c]{@{}c@{}}Un.  acc. $\downarrow$\end{tabular} & 
        \begin{tabular}[c]{@{}c@{}}mmlu   acc. $\uparrow$\end{tabular} 
        \\
        \midrule[1.5pt]
        \multicolumn{1}{c}{\textbf{}} & \multicolumn{6}{c}{\textbf{Llama-2-7b-hf}} & \multicolumn{2}{c}{\textbf{zephyr-7b-beta}} \\
        \midrule[1.2pt]
        Vanilla    &  0.01 & 8.16 & 0.68       & 0.02 & -4.72  & 0.56  & 0.4453 & 0.5845 \\
        fully-finetuned    &  0.92 & -57.34 & 0.69        & 0.29 & -99.81  & 0.55  & - & - \\
        \midrule[0.2pt]
        %GA         & 0.0079 & -23.8166 & 0.0        & 0.0079 & 52.7708  & 0.0        \\
        GD         & 0.0079 & -24.5562 & 0.0        & 0.0116 & 88.2242  & 0.3971  & 0.2420 & 0.4772  \\
        GD w. GU         & 0.0079 & -24.6394 & 0.0       & \cellcolor{mypink!30}0.0085 & \cellcolor{myblue!12}88.0562  & \cellcolor{myblue!22}0.3992 & \cellcolor{mypink!32}0.2375 & \cellcolor{myblue!22}0.4937  \\
        CEU        & 0.0079 & -58.8018 & 0.0        & 0.0079 & -7.3468  & 0.0 &  0.2455 &0.2689      \\
        CEU w. GU        & 0.0079 & \cellcolor{myblue!12}-58.0251 & 0.0        & 0.0182 & 66.1418  & \cellcolor{myblue!100}0.4349 & 0.2455 & 0.2689        \\
        NPO        & 0.3933 & -54.4933 & 0.6185   & 0.1021 & -85.8312 & 0.5050 & 0.3457 & 0.5422  \\
        NPO w. GU & \cellcolor{mypink!12}0.3822    & \cellcolor{myblue!22}-53.7352 & \cellcolor{myblue!22} 0.6251   & 0.1175 & -86.04 & 0.5037  & 0.3668 & \cellcolor{myblue!22}0.5518   \\
        SatImp     & 0.7710 & -58.3950 & 0.6114   & 0.2287 & -99.8741 & 0.3991 & 0.4177 &0.5654  \\
        SatImp w. GU    & \cellcolor{mypink!32}0.7321 & \cellcolor{myblue!12}-57.3851 & \cellcolor{myblue!36} 0.6310   & \cellcolor{mypink!22}0.1943 & -99.8740 &\cellcolor{myblue!26} 0.4100 & \cellcolor{mypink!12}0.4157 & \cellcolor{myblue!12}0.5674  \\
        SimNPO     & 0.1407 & -54.2530 & 0.5103   & 0.1778 & -99.8741 & 0.4114  & 0.4192 & 0.5658 \\
        SimNPO w. GU    & \cellcolor{mypink!62}0.0813 & \cellcolor{myblue!42}-46.4866 & \cellcolor{myblue!69}0.5980   & \cellcolor{mypink!62}0.0957 & -99.8740 & \cellcolor{myblue!2} 0.4143 & \cellcolor{mypink!12}0.4177 &0.5663   \\
        % RMU        & 0.0079 & -24.1124 & 0.0167   & 0.0115 & 55.3107  & 0.5168   \\

        UNDIAL     & 0.0231 & -18.3432 & 0.6309   & 0.0110 & -98.9085 & 0.1928  & 0.3829 & 0.5596  \\
        UNDIAL w. GU    & \cellcolor{mypink!12}0.0219 &\cellcolor{myblue!12} -18.2137 & \cellcolor{myblue!12}0.6370   &  0.0168 & -99.37 & \cellcolor{myblue!100}0.3638  & \cellcolor{mypink!12}0.3789 & 0.5612\\
        WGA        & 0.0079 & -49.9445 & 0.4689   & 0.0102 & 101.1335 & 0.4602 & 0.2455 & 0.2550   \\
        WGA w. GU       & 0.0079 &\cellcolor{myblue!58} -40.1072 & 0.4682   & \cellcolor{mypink!35}0.0084 & 108.14 & \cellcolor{myblue!7} 0.4615 & 0.3819 & \cellcolor{myblue!100}0.5498  \\
        \bottomrule[1.5pt]
        \vspace{-3em}
    \end{tabular}
    }
\end{table*}

\vspace{-1em}
\subsection{Results: GU Delivers Pareto Improvements}

Table~\ref{tab:main_results} shows the results on the \textbf{TOFU} benchmark. We use two informative references. \emph{Vanilla} is the pretrained backbone without any TOFU fine-tuning; it neither learns nor regurgitates TOFU facts (low ES on both splits), enjoys perfect MIA-closeness ($\mathrm{Priv}=1$), and yields moderate utility. \emph{Fully-finetuned} is trained on the entire TOFU corpus; it memorizes broadly (high ES on both splits), collapses privacy ($\mathrm{Priv}=0$), and reaches a utility ceiling. The practical goal is to move unlearning methods off these single-orbit extremes toward a frontier that combines low ES on the forget set with high ES on the retain set and high utility, while keeping privacy nontrivial.

Table~\ref{tab:muse_benchmark} shows the results on two complementary settings: (i) \textbf{MUSE} (Llama-2-7B) probes verbatim reproduction and QA over Books/News, where forgetting should \emph{reduce} ES on the forget split while \emph{preserving} ROUGE on retained QA and \emph{reducing} privacy leakage toward zero; (ii) \textbf{WMDP-cyber} (zephyr-7b-beta) probes capability removal, where lower unlearning accuracy (Un.\ acc.\,$\downarrow$) signals safer behavior while general ability (MMLU\,$\uparrow$) should not degrade.
\vspace{-1.5em}
\paragraph{Observed Pareto shifts at fixed or lower ES Un.}
{In Figure~\ref{fig:pareto_TOFU}, the Base $\rightarrow$ GU shifts follow a Pareto-improving direction across nearly all metrics. Specifically, (i) ES Un decreases further, indicating that GU not only preserves but even slightly improves forgetting effectiveness; (ii) ES Re and Priv increase, demonstrating that GU substantially mitigates the trade-off typically observed in existing unlearning methods, forgetting the target knowledge no longer harms retained knowledge or privacy; and (iii) MU remains stable or improves, showing that GU enhances unlearning without compromising overall model utility. Taken together, these trends highlight that GU enables precise and low-side-effect unlearning, transforming the unlearning process from a severe trade-off challenge into a near-Pareto-optimal operation.}
These shifts match the geometric expectation that removing retain-tangent components preserves forgetting while unlocking retention and utility. The more detailed analysis on TOFO and the Pareto improvement and detailed analysis on MUSE and WMDP are in Appendix~\ref{sec:tofuanalysis},\ref{sec:museanalysis}.
\vspace{-1em}
\subsection{Ablations, Cost Analysis and Qualitative Analysis}
\vspace{-0.5em}
Additional ablations on the choice of geometry (Adam-diagonal vs. Euclidean), optimizer compatibility (AdamW vs. SGD), multi-seed robustness, and a profiling-based cost breakdown (showing $<3\%$ wall-clock overhead and $<5\%$ memory increase) are provided in Appendix~\ref{app:ablation-cost}. Qualitative analysis and case studies are detailed in Appendix~\ref{sec:qualitative}.

% \paragraph{Synthesis: cross-benchmark consistency from the same mechanism.}
% Across MUSE (content reproduction/QA) and WMDP (hazardous capability), the signatures of GU are consistent: (i) forgetting is preserved or improved without instability; (ii) retention quality improves broadly; (iii) privacy leakage moves toward zero where the metric is informative; and (iv) on safety tasks, harmful capability is reduced in most cases while general ability is maintained or improved. 
% These effects arise from the same geometry—retain-informed whitening and retain-null projection—indicating that disentangling retain–forget directions is a transferable inductive bias rather than a dataset-specific trick.

\vspace{-1em}
\section{Related Work}
\vspace{-0.5em}
\paragraph{LLM Unlearning}
Machine unlearning for LLMs aims to remove the influence of a designated forget set while preserving capabilities on the retain distribution, ideally approximating retraining on retained data~\citep{openunlearning}. Existing approaches largely operate by modifying post-training updates~\citep{dong-etal-2025-undial,ji2024reversing}.
% Existing approaches attempt to mitigate this tradeoff by incorporating empirical controls during fine-tuning~\citep{dong-etal-2025-undial,ji2024reversing} or by adjusting training preferences to balance the emphasis on forgetting and retaining data~\citep{NPO_zhang2024negative}. 
% While helpful, these methods lack exploration into the cause of the forget-retain trade-off.
% Although some machine learning methods look into 
\emph{Gradient-based unlearning} performs loss maximization on forget samples (often combined with retain regularization), but can be unstable and prone to catastrophic forgetting or collapse~\citep{thudi2022unrolling, izzo2021approximate}. \emph{Reweighting / preference-style objectives} such as NPO/SimNPO/DPO improve optimization stability by reshaping the effective gradient contributions of forget samples~\citep{NPO_zhang2024negative,fan2024simplicity_simnpo,rafailov2023dpo}. Other lines include \emph{representation engineering} (e.g., manipulating hidden states or distilling away targeted behaviors)~\citep{wang2025llm, ginart2019making} and \emph{inference-time masking} that corrupts prompts or embeddings to suppress recall without changing parameters~\citep{le2025survey, liu2024large}. Despite progress, many methods exhibit \emph{non-robust} unlearning: traces of the forgotten content can remain recoverable under adversarial prompting or extraction evaluations, and the unlearned model can remain distinguishable from a retrained reference~\citep{maini2024tofu, li2024wmdp, openunlearning}. These failure modes reflect a central structural challenge: LLM knowledge is stored in distributed, overlapping representations, so updates intended to erase a concept often interfere with unrelated behaviors or fail to fully remove the targeted mechanism~\citep{maini2024tofu, microedit2024, ghosal2025memorization, liu-etal-2025-disentangling}. %Knowledge editing methods such as ROME~\citep{meng2022locating} pursue targeted fact modification, but similarly face interference when updates interact with shared representations.

\vspace{-1.5em}
\paragraph{Geometric Control via Orthogonal Projection}
A natural response to interference is to impose \emph{geometric constraints} on parameter updates. In continual learning, gradient projection/surgery methods preserve prior tasks by restricting new updates to directions orthogonal to protected gradients or subspaces~\citep{kirkpatrick2017overcoming, NEURIPS2020_3fe78a8a, farajtabar2020orthogonal}. Recent unlearning and editing work adapts this principle by constructing subspaces from activations or gradient statistics and projecting updates to confine forgetting~\citep{fang2025alphaedit, feng-etal-2025-geoedit,kim2024negmerge, he2025TowardsNatural, cadet2024deep}. 
%For example, AlphaEdit and GeoEdit similarly constrain edits to reduce side effects via subspace-aware updates~\citep{fang2025alphaedit, feng-etal-2025-geoedit}; Complementary directions manipulate weights or labels (e.g., NegMerge, NatMU) and propose empirical benchmarks for comparing such spectral heuristics~\citep{kim2024negmerge, he2025TowardsNatural, cadet2024deep}; 
Several recent unlearning methods explicitly reduce forget-retain interference by enforcing hard or soft orthogonality between forget updates and a retain subspace, via per-sample gradient projection (OrthoGrad~\cite{shamsian2025OrthoGrad}), penalty-based cosine regularization (UNO~\cite{mandal2025uno}), constrained min-norm updates~\cite{block2025machine}, or representation-level subspace projection for incremental unlearning (FG-OrIU~\cite{feng2025FG-OrIU}). PGU~\citep{hoang2024learn}, UNSC~\citep{chen2024unsc}, and SEMU~\citep{sendera2025semu} impose projection or null-space constraints derived from covariance- or SVD-style approximations.   However, these methods rely on coarse estimates (e.g., small-batch SVD/covariance) and typically define orthogonality under a \emph{fixed Euclidean geometry},% whereas practical LLM optimization is governed by \textbf{optimizer-induced preconditioning}.
in contrast, GU derives orthogonality from a \emph{retain-invariance} specification and implements it as an \textbf{optimizer-metric} $H$ projection layer that is objective-preserving, pipeline-agnostic, and compatible with standard preconditioned optimizers. This motivates the need for \emph{optimizer-aware} orthogonalization that scales to LLMs and offers principled control of the forget-retain trade-off. (see Appendix~\ref{sec:discussion_rws} for detailed comparisons).

\vspace{-4.5mm}
\section{Conclusion}
\label{sec:conclusion}
\vspace{-2mm}
% To seek a theoretically sound and simple solution that precisely reduces forget-retain tradeoff, we studied when forgetting updates leave retained knowledge unchanged and showed that a local retain-invariance requirement aligns with orthogonality to the retain-gradient subspace. Building on this equivalence, we introduced Geometric-disentanglement Unlearning, which projects updates onto the retain-orthogonal complement, reducing side effects on the retain set. GU is plug-and-play, optimizer-compatible, and architecture-agnostic, and it attaches seamlessly to existing gradient-based unlearning pipelines to mitigate collateral harm.
% Across TOFU, MUSE, and WMDP, GU delivers Pareto improvements in preserving retained performance.
We formalize theoretically sound LLM unlearning as enforcing retain-invariance, showing its equivalence to forgetting updates orthogonal to retain-gradient subspace under an SPD metric. We propose GU, a plug-and-play projection that attaches to existing gradient-based unlearning and improves forget-retain trade-off acrossTOFU, MUSE, WMDP.
\vspace{-10mm}
\paragraph{Limitations \& Future Work}
Our method relies on the availability of a representative retain dataset to construct the orthogonal subspace. In scenarios where retain data is unavailable, constructing an accurate $T_r$ becomes challenging. Additionally, our implementation approximates the Riemannian metric $H$ using diagonal information from optimizers to ensure computational feasibility on LLMs. While efficient, this ignores higher-order parameter correlations that could theoretically offer finer-grained control.

\section*{Broader Impact}

This work advances machine unlearning toward precise, targeted erasure of specific training influences—e.g., private attributes, copyrighted passages, or hazardous know-how—from large language models (LLMs). Reliable targeted removal can materially improve real-world deployability of foundation models by enabling post hoc compliance with deletion obligations and reducing downstream misuse risk, especially for widely distributed or open-weight models where “recall” is not feasible.

A central challenge is that effective forgetting frequently causes collateral damage—utility loss on unrelated retained knowledge—making unlearning brittle and difficult to audit. We address this by formalizing a concrete, testable notion of retain-invariance: under a local (first-order) model of training dynamics, retain performance is preserved when updates are orthogonal to the retain-gradient subspace under the optimizer’s geometry. This geometric view turns “disentanglement” from a vague heuristic into an explicit constraint with a transparent operational meaning: it isolates and removes the update component most responsible for degrading retained behavior. By making the intervention more structured and predictable, our approach can improve controllability and reduce unintended model deterioration relative to unconstrained forgetting updates.

We also recognize dual-use concerns. In principle, any unlearning tool could be repurposed to remove safety alignment behaviors or to selectively weaken safeguards. Our contribution does not eliminate this risk, but it does promote auditability and boundedness: the update is explicitly decomposed into retain-tangential versus retain-orthogonal components, making it easier to inspect what is being changed and to subject the result to verification. Our experiments rely exclusively on established public benchmarks, and we recommend that any deployment be paired with rigorous verification protocols (e.g., utility regression suites, privacy leakage and extraction stress tests, and change auditing across safety-critical behaviors) to ensure that unlearning is used in accordance with data-minimization principles and applicable legal standards.

\bibliography{iclr2026_conference}
\bibliographystyle{icml2026}

\appendix
\onecolumn

\textbf{\Large Appendix}

We include extended discussion~\ref{sec:discussion}, complete algorithm~\ref{sec:alg}/proof~\ref{sec:proof} details, additional experiments~\ref{sec:extra_experimrnts} and ablation studies~\ref{app:ablation-cost} (with ablations~\ref{app:opt-ablation} and cost analysis~\ref{app:cost}), and qualitative results~\ref{sec:qualitative}.

\section{Discussion}
\label{sec:discussion}
\subsection{Discussion on Optimizer-induced Metric}
\label{sec:discussion_adam}
We work in parameter space $\mathbb{R}^p$.
Let $H\succ0$ denote the optimizer-induced metric; for Adam, $H=W^\top W$ with
\begin{align}
\label{eq:whiten}
W = \mathrm{diag}\big(1/\sqrt{\hat v+\varepsilon}\big),
\end{align}
where $\hat v$ is Adam's second-moment accumulator.
For an optimizer with a (possibly time-varying) linear preconditioner $P_t$ such that the step direction is $d_t = -P_t g_t$,
set the metric to
\[
H_t := P_t^\top P_t,
\]
and use $H_t$ consistently to build the retain basis $U$ and the projectors $P_{T_r}^{(H_t)},P_\perp^{(H_t)}$.
Then the local retain invariance $\Delta^{(1)}L_r=\langle \nabla L_r,\Delta\theta\rangle_{H_t}=0$ is equivalent to $\Delta\theta\in T_r^\perp$ under $H_t$,
and all first-order safety statements carry the same.

\subsection{Constructing the retain-orthogonal space.}
\label{sec:discussion_retain}
To protect retained behavior, we here introduce how we derive $T_r$ on $D_r$ from a retain loss:
\[
L_{r}(\theta)\;=\;\mathbb{E}_{x\in D_r}\Big[\mathrm{KL}\big(\pi_\theta(\cdot\mid x)\,\|\,\pi_{\mathrm{ref}}(\cdot\mid x)\big)\Big],
\]
which yields low-variance and stable gradients, a zero-gradient baseline near $\pi_\theta\approx\pi_{\mathrm{ref}}$, and alignment with preserving output style.

From a small retain mini-batch $B_r\subset D_r$ on selected tensors, we collect retain gradients to form $U$ and orthonormalize in whitened coordinates (Gram–Schmidt) so that $U^\top H U=I_k$.
For any $v\in\mathbb{R}^p$, the projection $P^{(H)}_{\perp}v$ lies in $T_r^\perp$ and satisfies $U^\top H\,P^{(H)}_{\perp}v=0$.
Hence its $H$ inner products with all retain-tangential directions vanish.
Equivalently, $P^{(H)}_{\perp}$ removes the tangential component along $T_r$ and preserves only the $H$ normal component, thereby eliminating retain–forget entanglement to first order while keeping the component that drives forgetting.

% \begin{proposition}[KL stability on $D_r$]
% \label{prop:kl}
% Under (A2) and $\beta=0$, one split step satisfies
% \[
% \mathrm{KL}\big(\pi_{\theta+\Delta\theta} \Vert \pi_{\theta}\big)\big|_{D_r}
% =
% O(\|\Delta\theta\|_F^2)
% \le
% \tfrac{1}{2c_1} \|\Delta\theta\|_H^2,
% \]
% while the first-order term vanishes.
% With $\beta>0$, the leading term is \emph{negative} along $T_r$ and the remainder is $O(\|\Delta\theta\|_H^2)$.

% \emph{Proof Sketch.}
% Second-order expansion of KL gives $\tfrac{1}{2}\Delta\theta^\top F \Delta\theta + o(\|\Delta\theta\|^2)$.
% By (A2) and Proposition~\ref{prop:safety}, the first-order term is zero and $F\preceq \tfrac{1}{c_1}H$. \qed
% \end{proposition}

\subsection{{Discussion on Projection-based Unlearning Methods}}
\label{sec:discussion_rws}

This section provides a detailed comparison between GU and prior projection-based unlearning methods (UNSC~\citep{chen2024unsc}, PGU~\citep{hoang2024learn}, SEMU~\citep{sendera2025semu}, OrthoGrad~\cite{shamsian2025OrthoGrad}, UNO~\cite{mandal2025uno}, FG-OrIU~\cite{feng2025FG-OrIU}, and MinNorm-OG~\cite{block2025machine}), and clarifies why we do not include them as baselines in our LLM experiments. While these works share the high-level theme of ``projection'', they are developed under different task assumptions and operate on different geometric objects, which makes a faithful LLM-scale implementation either mismatched to our setting or computationally prohibitive.

\paragraph{The Target Unlearning Regime is Fundamentally Different.}
UNSC~\citep{chen2024unsc} and PGU~\citep{hoang2024learn} are designed for supervised image classification with a small, fixed label space, where one can meaningfully define \emph{class-conditional} representation statistics and compute dataset-level matrices (e.g., per-class covariances or Gram matrices) to protect retain classes. SEMU~\citep{sendera2025semu} similarly assumes a supervised setup and constructs low-rank subspaces from forget-set gradients/statistics to parameterize the unlearning update. In contrast, our target is large-scale \emph{LLM unlearning in the post-training regime} (offline SFT and preference tuning), where outputs are open-vocabulary sequences, ``classes'' are not well-defined, and unlearning is performed \emph{post hoc} as a stage in a multi-step alignment pipeline. Under this regime, the structural assumptions behind class-based or dataset-statistics-based projections (and the need to repeatedly recompute such statistics across stages) do not directly apply.

\paragraph{These Methods Project onto Different \emph{Geometric Objects}.}
Even within ``projection'' methods, the protected object varies substantially.
(1) \textbf{Representation/statistics-based subspaces.}
UNSC and PGU define protected directions through proxies derived from activations or dataset statistics (e.g., class-conditional activation subspaces, retain-only Gram/covariance structure), with the goal that preserving these proxies approximately preserves retain behavior.
FG-OrIU~\cite{feng2025FG-OrIU} further moves projection to the representation level for incremental unlearning by estimating layerwise forget/retain feature subspaces (typically via SVD) and applying forward-feature and backward-gradient projections while training low-rank adapters.
(2) \textbf{Forget-dominated low-rank parameterizations.}
SEMU constructs a low-rank subspace primarily from $D_f$-derived information and restricts updates to that subspace; retention is handled indirectly through low rank and step-size control rather than through an explicit retain-side invariance constraint.
(3) \textbf{Gradient-orthogonality constraints.}
OrthoGrad~\cite{shamsian2025OrthoGrad} enforces hard orthogonalization by estimating a retain subspace from \emph{per-sample} retain gradients (via QR) and projecting the forget update onto its orthogonal complement. In addition, OrthoGrad orthogonalizes gradients in the \emph{Euclidean geometry}, whereas GU orthogonalizes updates in the optimizer-induced SPD geometry derived from a retain-invariance specification; the latter is aligned with preconditioned optimization and therefore controls retain drift under the actual LLM training dynamics.
UNO~\cite{mandal2025uno} (for generative models) instead imposes a \emph{soft} cosine-orthogonality regularizer between forget/retain gradients inside the objective, making the effect sensitive to penalty tuning.
MinNorm-OG~\cite{block2025machine} formulates unlearning as constrained optimization with an explicit feasibility constraint $\Delta \in \mathrm{span}(g_{\text{retain}})^\perp$, selecting approximately minimum-norm updates under this constraint.

GU differs in the \emph{definition} of the protected geometry: rather than using indirect proxies (activations/Grams) or relying on forget-only low-rank parameterizations, we define the retain geometry directly in parameter space as the retain-gradient span under the \textbf{optimizer-induced SPD metric} $H$,
\[
T_r(\theta) \;=\; \mathrm{span}\{\nabla_\theta \ell_r(x_r;\theta): x_r \in D_r\},
\]
and we prove that \emph{local retain invariance} is equivalent to $H$-orthogonality to $T_r(\theta)$. Within the retain-orthogonal set, the GU update is the steepest-descent direction for the chosen unlearning objective. Thus, in GU the retain-gradient subspace is the primary geometric object and is explicitly aligned with the optimizer geometry, rather than a representation-level proxy constructed from activations, Grams, or ad hoc low-rank parameterizations.

\paragraph{Scalability is the main practical barrier for LLM baselines.}
A second key distinction is computational scaling.
UNSC/PGU require computing and factorizing per-class activation covariances or retain Grams (often layerwise), and FG-OrIU/SEMU require repeated layerwise SVD-style operations over high-dimensional representations or gradient-derived matrices. For LLMs with hidden dimension $d\approx 4\text{k}$--$8\text{k}$ and many layers, these procedures involve $d\times d$ objects across layers, leading to prohibitive memory and factorization cost, especially when unlearning must be repeated across stages on top of existing SFT/preference tuning. Any ``lightweight'' adaptation (e.g., collapsing outputs into pseudo-classes, dropping most layers, or aggressively subsampling statistics) would deviate substantially from the original algorithms, making negative results difficult to interpret.

GU is explicitly designed to avoid $d^2$-scale computations: we never form activation/Gram matrices and never run layerwise SVDs. Instead, we maintain a low-rank retain basis $B\in\mathbb{R}^{d\times k}$ in parameter space using streaming updates from retain gradients, and project forget gradients via vector-level operations under $H$ (e.g., $g_f^\perp = g_f - \Pi_{T_r}^{H} g_f$), yielding $O(kd)$ cost per step (plus small $O(k^2)$ bookkeeping). This turns geometric disentanglement into a lightweight \emph{routing layer} that introduces only modest overhead on top of existing LLM unlearning pipelines (e.g., SimNPO, SatImp, WGA, NPO), making it practically deployable at LLM scale.

\paragraph{Why we do not include these baselines in our LLM experiments.}
For UNSC/PGU/FG-OrIU/SEMU, a faithful implementation at LLM scale is either infeasible (due to repeated $d^2$-scale statistics/factorizations) or would require non-faithful approximations that change the method qualitatively. For OrthoGrad, the need for per-sample retain gradients and QR-style subspace estimation makes scaling to LLMs expensive in both memory and runtime, and the method is not evaluated in LLM unlearning benchmarks. UNO targets generative-model unlearning and optimizes a different objective class. MinNorm-OG studies constrained solutions with minimum-norm selection under orthogonality constraints, emphasizing global optimality/complexity control rather than plug-and-play integration into multi-stage LLM alignment pipelines.

Therefore, instead of including potentially non-faithful approximations as LLM baselines, we compare against strong LLM-suitable unlearning methods and conduct ablations that directly probe GU's geometry (metric choice, subspace rank), which more cleanly answers our central question: whether optimizer-aware geometric disentanglement improves the Pareto trade-off between forgetting and retention in the LLM post-training regime.

\section{Algorithm Details}
\label{sec:alg}
Here, we introduce the detailed practical implementation of Geometric-Disentanglement Unlearning (GU) in Algorithm~\ref{alg:GU}.
At each step we only project a small, automatically selected subset of trainable tensors: the last $K$ Transformer blocks (largest layer indices detected from names such as \texttt{.layers.$i$.}, \texttt{.h.$i$.}, \texttt{.blocks.$i$.}, \texttt{.decoder.layers.$i$.)}), plus the final normalization(s) and the output head. This keeps cost and memory small while targeting the most forget-sensitive layers.

%\paragraph{Metric $H$ used by the projectors.}
We work in coordinates whitened by the Adam preconditioner. For each selected parameter tensor $p$, let $v_p$ denote Adam’s second moment estimate; then

\begin{equation*}
    W_p  =  \mathrm{diag}\!\Big(\tfrac{1}{\sqrt{v_p+\varepsilon}}\Big),
\qquad
H_p  =  W_p^{\!\top}W_p
\quad(\text{diagonal}).
\end{equation*}

In practice, we \emph{bind} to the optimizer state and reuse $v_p$ without extra memory; if unavailable, we maintain an EMA of squared gradients. All projections are performed in whitened coordinates $\tilde g = W_p g$. Thus, $H$ is approximated by the Adam diagonal Fisher/Gauss–Newton surrogate already maintained during training.

\begin{algorithm}[h]
    \caption{Geometric-disentanglement Unlearning}
    \label{alg:GU}
        {
    \begin{algorithmic}[1]
    \REQUIRE At step $t$, Parameters $\theta_t$, optimizer-induced metric $H_t$ (whitener $W_t$), reference model $\theta_{\mathrm{ref}}$, batches $B_f,B_r$, weights $\gamma,\alpha$, \emph{any} forget loss function $L_f$ and retain loss function $L_r$
    \STATE Compute $L_f(\theta_t;B_f)$ and  $L_r(\theta_t;B_r,\theta_{\mathrm{ref}})$, form $L_{\mathrm{tot}}=\gamma L_f+\alpha L_r$ and total gradient $g_{\mathrm{tot}}=\nabla_\theta L_{\mathrm{tot}}$
    \STATE Compute retain KL anchor $L_r^{\mathrm{KL}}$ and gradient $g_r$, whiten $\tilde g_r=W_t g_r$ and update the retain basis $U_t$ to approximate $T_r(\theta_t)$.
    \STATE Recover forget gradient via $g_f=(g_{\mathrm{tot}}-\alpha g_r)/\gamma$ and whiten $\tilde g_f=W_t g_f$
    \STATE Decompose $\tilde g_f$ w.r.t.\ $U_t$ under $H_t$: obtain retain-orthogonal $\tilde g_f^\perp$ and a sign-selective, norm-capped tangent part $\tilde g_f^{\mathrm{tan,keep}}$; similarly get $\tilde g_r^{\mathrm{tan}}$ %(Eqs.~\eqref{eq:Tr-def}–\eqref{eq:gGU-white})
    \STATE Form whitened GU direction $\tilde g_{\mathrm{GU}}=\gamma(\tilde g_f^\perp+\tilde g_f^{\mathrm{tan,keep}})+\alpha\,\tilde g_r^{\mathrm{tan}}$, map back $g_{\mathrm{GU}}=W_t^{-1}\tilde g_{\mathrm{GU}}$, and let the base optimizer step with gradient $g_{\mathrm{GU}}$ to obtain $\theta_{t+1}$
    \ENSURE Updated parameters $\theta_{t+1}$
    \end{algorithmic}}
\end{algorithm}

\paragraph{How we find $P^{(H)}_{T_r}$ and $P^{(H)}_{\perp}$.}
Per selected tensor $p$, we maintain a small basis $U_p=\{u_{p,1},\ldots,u_{p,m}\}$ (with $m\le k$) that spans the retain tangent subspace $T_r$ under $H_p$. The basis is refreshed at low frequency using one backward pass of a lightweight retain anchor
\[
\mathcal{L}_r(\theta) = \mathbb{E}_{x\in D_r}\mathrm{KL}\!\big(\pi_\theta(\cdot|x)\,\Vert\,\pi_{\mathrm{ref}}(\cdot|x)\big).
\]
For each $p$, we compute $g_{r,p}=\partial\mathcal{L}_r/\partial p$, whiten $\tilde g_{r,p}=W_p g_{r,p}$, and run Gram–Schmidt against the current $U_p$ in float32; if the relative residual $\|\mathrm{res}\|/\|g_{r,p}\|$ exceeds \texttt{residual\_keep\_thresh} and $|U_p|<k$, we append the normalized residual (stored in fp16). In whitened coordinates the projectors are
\begin{align*}
    P^{(H)}_{T_r}(p) &= U_p\,(U_p^{\!\top}U_p)^{-1}U_p^{\!\top},\\
    P^{(H)}_{\perp}(p) &= I-P^{(H)}_{T_r}(p),
\end{align*}
implemented via “accumulate/subtract along $U_p$” formulas.

\paragraph{Projected update used in training.}
Let the training objective be $\mathcal{L}=\gamma\,\mathcal{L}_f+\alpha\,\mathcal{L}_r$, where $\mathcal{L}_f$ is any forget loss (DPO, NPO, UNDIAL, SimNPO, CEU, WGA, SaT-IMP, or plain NLL). After the normal backward pass, $g_{\mathrm{tot},p}=\gamma g_{f,p}+\alpha g_{r,p}$ is stored in \texttt{p.grad}. Right before the optimizer step we:
\textit{(i)} recompute the scalar retain anchor once to obtain $g_{r,p}$ (this also refreshes $U_p$ when scheduled);
\textit{(ii)} recover $g_{f,p}=(g_{\mathrm{tot},p}-\alpha g_{r,p})/\gamma$ without an extra forward pass;
\textit{(iii)} whiten: $\tilde g_{f,p}=W_p g_{f,p}$, $\tilde g_{r,p}=W_p g_{r,p}$;
\textit{(iv)} project:
\[
\tilde g^{\,\text{safe}}_{f,p}=P^{(H)}_{\perp}(p)\,\tilde g_{f,p}, 
\qquad 
\tilde g^{\,\text{tan}}_{r,p}=P^{(H)}_{T_r}(p)\,\tilde g_{r,p},
\]
optionally adding a \emph{sign-aware, capped} tangential component from $\tilde g_{f,p}$: for each basis vector $u_{p,j}$, keep $a_j u_{p,j}$ with $a_j=\langle \tilde g_{f,p},u_{p,j}\rangle$ only if $a_j\,b_j<-\tau$ where $b_j=\langle \tilde g_{r,p},u_{p,j}\rangle$, then cap the resulting tangential norm (see trust region);
\textit{(v)} de-whiten and overwrite the final gradient:
\[
g^{\star}_p  =  \gamma\,W_p^{-1}\tilde g^{\,\text{safe}}_{f,p} + \alpha\,W_p^{-1}\tilde g^{\,\text{tan}}_{r,p}.
\]
The base optimizer (Adam) takes the step with its usual learning rate.

\paragraph{Trust region in practice.}
We do not run a separate line-search; instead we enforce an \emph{anisotropic trust region in whitened coordinates} that limits motion along $T_r$ relative to the retain-orthogonal direction:
\[
\big\| \tilde g^{\,\text{tan,keep}}_{f,p} \big\|_2
 \le  \kappa \,\big\| \tilde g^{\,\text{safe}}_{f,p} \big\|_2 ,
\qquad 0\le \kappa \le 1,
\]
with $\kappa=$ (default $0.5$). We also use a sign threshold $\tau=$ (default $0$) and only keep tangential components where forget and retain gradients have \emph{opposite} signs along the same basis vector ($\langle \tilde g_{f,p},u_{p,j}\rangle\cdot\langle \tilde g_{r,p},u_{p,j}\rangle<-\tau$). The pair $(\kappa,\tau)$ acts as a stable trust-region controller; the global step size remains the optimizer's learning rate. We list $\kappa$ and $\tau$ in the hyperparameter table of the appendix.

\subsection{Sign-Aware Selective Projection}
\label{sec:discussion_sign}
Not all retain-tangential components of $\nabla L_f$ are harmful to $L_r$.
In whitened coordinates $\tilde g_f:=W\nabla L_f$, $\tilde g_r:=W\nabla L_r$, let
$a_i=\langle \tilde g_f,u_i\rangle$, $b_i=\langle \tilde g_r,u_i\rangle$.
We keep only the \emph{opposite-signed} retain-tangential components of $\tilde g_f$ (which locally \emph{decrease} $L_r$), and discard same-signed (harmful) ones.
We also cap their magnitude to avoid drifting within $T_r$:
\begin{align}
\label{eq:sign-aware}
\text{Keep $u_i$ if } a_i b_i &< -\tau
\quad(\tau\ge 0), \notag \\
\big\|\textstyle\sum_{i: a_i b_i<-\tau} a_i u_i\big\|
&\le
\kappa \big\|P_{\perp}^{(H)}\tilde g_f\big\|
\quad(0<\kappa\le 1).
\end{align}
The resulting forget direction in whitened coordinates is
$\tilde g_f^{\rm sel}=P_{\perp}^{(H)}\tilde g_f + \text{tan\_keep}$, while the retain direction is
$\tilde g_r^{\rm nor}=P_{T_r}^{(H)}\tilde g_r$.
Mapping back with $W^{-1}$ yields the final gradient
$\nabla_\theta \leftarrow \gamma g_f^{\rm sel} + \alpha g_r^{\rm nor}$.
This preserves first-order safety (harmful tangential parts are removed) while not wasting helpful opposite-signed components.

\paragraph{Sign-aware refinement.}
With the sign-aware rule (Eq.~\eqref{eq:sign-aware}), the forget direction reads
$P^{(H)}_\perp g_f + \sum_{i\in\mathcal{K}} a_i u_i$ where $\mathcal{K}=\{i: a_i b_i<-\tau\}$.
Its contribution to the first-order retain change is
$-\rho\sum_{i\in\mathcal{K}} a_i b_i \le -\rho \tau \sum_{i\in\mathcal{K}} |a_i||b_i|\le0$,
so Proposition~\ref{prop:safety} strengthens to
\[
\Delta^{(1)} L_r \le -\rho\beta\|P^{(H)}_{T_r} g_r\|_H^2 -\rho \tau\!\!\sum_{i\in\mathcal{K}} |a_i||b_i| \le 0.
\]
The cap in Eq.~\eqref{eq:sign-aware} further bounds the tangential energy, preventing drift within $T_r$.
\section{Proofs}
\label{sec:proof}
\subsection{Proof of Proposition~\ref{prop:retain-subspace-orth}}
\label{sec:prooforth}

Formally, we reframe our Proposition~\ref{prop:retain-subspace-orth} as Prop~\ref{prop:retain-subspace-orth2}
\begin{proposition}[Retain gradient subspace and $H$-orthogonality]
\label{prop:retain-subspace-orth2}
Fix $\theta\in\mathbb{R}^p$ and an SPD matrix $H\succ0$ inducing the inner product
$\langle u,v\rangle_H := u^\top H v$ and norm $\|v\|_H := \sqrt{\langle v,v\rangle_H}$.
For each retain sample $x_r\in D_r$, assume $\ell_r(x_r;\theta)$ is differentiable and write its (Euclidean) gradient
$g(x_r):=\nabla_\theta \ell_r(x_r;\theta)\in\mathbb{R}^p$.
Define the retain gradient subspace and its $H$-orthogonal complement
\begin{align*}
    T_r(\theta)&:=\mathrm{span}\{ g(x_r): x_r\in D_r \}\subseteq\mathbb{R}^p,
    \\
    T_r(\theta)^\perp&:=\{ v\in\mathbb{R}^p:\ \langle v, g\rangle_H=0\ \forall g\in T_r(\theta) \}.
\end{align*}
For any finite (multi)set $S=\{x_{r_1},\dots,x_{r_m}\}\subset D_r$, define
\begin{align*}
    L_r^S(\theta):=&\sum_{i=1}^m \ell_r(x_{r_i};\theta),
    \\
    \nabla_\theta L_r^S(\theta)=&\sum_{i=1}^m g(x_{r_i})\in T_r(\theta).
\end{align*}
Then, for any direction $\Delta\theta\in\mathbb{R}^p$, the following are equivalent:
\begin{align*}
    \text{\emph{(i)}}\ \Delta\theta\in T_r(\theta)^\perp
\Longleftrightarrow 
\text{\emph{(ii)}}\ \langle \nabla_\theta L_r^S(\theta), \Delta\theta\rangle_H=0
\ \text{ for all finite } S\subset D_r.
\end{align*}
Equivalently, the first-order quantity $\Delta^{(1)}L_r^S(\theta;\Delta\theta):=\langle \nabla_\theta L_r^S(\theta),\Delta\theta\rangle_H$ vanishes for all finite $S$ iff $\Delta\theta\in T_r(\theta)^\perp$.
\end{proposition}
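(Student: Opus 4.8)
The plan is to prove the equivalence as a routine exercise in bilinear algebra: it is nothing more than the standard identity $(\mathrm{span}\,\mathcal G)^\perp = \mathcal G^\perp$ transported to the $H$-inner product, where $\mathcal G=\{g(x_r):x_r\in D_r\}$ is the generating family of retain gradients. Throughout I would use only that $\langle u,v\rangle_H=u^\top H v$ is symmetric and bilinear (symmetry of $H$ suffices; positive-definiteness is not needed for this particular statement), that a linear subspace is closed under finite sums, and that every vector of a span is a \emph{finite} linear combination of its generators — so no completeness or finite-dimensionality assumption on $D_r$ is required.

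For (i) $\Rightarrow$ (ii): assuming $\Delta\theta\in T_r(\theta)^\perp$, I would fix an arbitrary finite $S=\{x_{r_1},\dots,x_{r_m}\}\subset D_r$ and observe that $\nabla_\theta L_r^S(\theta)=\sum_{i=1}^m g(x_{r_i})$ lies in $T_r(\theta)$, because each summand does and $T_r(\theta)$ is a subspace; then $\langle \nabla_\theta L_r^S(\theta),\Delta\theta\rangle_H=0$ is immediate from the defining property of $T_r(\theta)^\perp$ together with symmetry of $\langle\cdot,\cdot\rangle_H$. Since $S$ was arbitrary, this is (ii).

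For (ii) $\Rightarrow$ (i): I would first specialize (ii) to singleton sets $S=\{x_r\}$, which gives $\langle g(x_r),\Delta\theta\rangle_H=0$ for every $x_r\in D_r$ individually. Then for an arbitrary $g\in T_r(\theta)$, writing $g=\sum_{j=1}^n c_j\,g(x_{r_j})$ for finitely many samples $x_{r_1},\dots,x_{r_n}\in D_r$ and reals $c_1,\dots,c_n$, linearity in the first argument gives $\langle g,\Delta\theta\rangle_H=\sum_{j=1}^n c_j\,\langle g(x_{r_j}),\Delta\theta\rangle_H=0$, so $\Delta\theta\in T_r(\theta)^\perp$. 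The closing ``equivalently'' sentence then needs no separate argument, since $\Delta^{(1)}L_r^S(\theta;\Delta\theta)$ is \emph{defined} to be exactly $\langle\nabla_\theta L_r^S(\theta),\Delta\theta\rangle_H$.

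I expect essentially no technical obstacle. The one place deserving a careful line is the direction (ii) $\Rightarrow$ (i): one should reduce to \emph{singleton} test sets and then invoke bilinearity to reach all of $T_r(\theta)$, rather than trying to realize a general element of the span directly through the partial-sum gradients $\nabla_\theta L_r^S$ (which only produce unit-coefficient combinations). Making that reduction explicit is what keeps the argument clean and also clarifies why only symmetry — not definiteness — of $H$ is used here, even though definiteness is invoked elsewhere in the paper's development.
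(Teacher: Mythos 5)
Your proof is correct and follows essentially the same route as the paper's own proof: show $\nabla_\theta L_r^S(\theta)\in T_r(\theta)$ so (i)$\Rightarrow$(ii) is immediate, then for (ii)$\Rightarrow$(i) specialize to singleton $S$ and extend to all of $T_r(\theta)$ by bilinearity. Your side remark that only symmetry of $H$ (not definiteness) is used here is a nice observation, and your warning about why one must reduce to singletons rather than rely on unit-coefficient partial sums is exactly the step the paper also makes explicit.
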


\begin{proof}
\textbf{Preliminaries.}
(i) $T_r(\theta)$ is a linear subspace of $\mathbb{R}^p$ by definition (finite linear combinations of the $g(x_r)$).
(ii) By linearity of the gradient,
$\nabla_\theta L_r^S(\theta)=\sum_{i=1}^m g(x_{r_i})\in T_r(\theta)$.

\smallskip
\noindent\emph{(i)$\Rightarrow$(ii).}
Assume $\Delta\theta\in T_r(\theta)^\perp$.
By definition of $T_r(\theta)^\perp$, $\langle \Delta\theta, v\rangle_H=0$ for all $v\in T_r(\theta)$.
In particular, since $\nabla_\theta L_r^S(\theta)\in T_r(\theta)$ for every finite $S$, we obtain
$\langle \nabla_\theta L_r^S(\theta), \Delta\theta\rangle_H=0$ for all finite $S\subset D_r$.

\smallskip
\noindent\emph{(ii)$\Rightarrow$(i).}
Assume $\langle \nabla_\theta L_r^S(\theta), \Delta\theta\rangle_H=0$ for all finite $S\subset D_r$.
Take a singleton set $S=\{x_r\}$.
Then $\nabla_\theta L_r^S(\theta)=g(x_r)$ and hence
$\langle g(x_r), \Delta\theta\rangle_H=0$ for every $x_r\in D_r$.
Let $v\in T_r(\theta)$ be arbitrary.
By definition of $T_r(\theta)$ there exist $x_{r_1},\dots,x_{r_m}$ and scalars $\alpha_1,\dots,\alpha_m$ with
$v=\sum_{i=1}^m \alpha_i g(x_{r_i})$.
Using bilinearity of $\langle\cdot,\cdot\rangle_H$ and the singleton orthogonality just shown,
\begin{align*}
    \langle v, \Delta\theta\rangle_H
&= \left\langle \sum_{i=1}^m \alpha_i g(x_{r_i}), \Delta\theta \right\rangle_H \\
&= \sum_{i=1}^m \alpha_i \langle g(x_{r_i}), \Delta\theta\rangle_H \\
&= 0.
\end{align*}
Since $v\in T_r(\theta)$ was arbitrary, $\Delta\theta\in T_r(\theta)^\perp$.

\smallskip
Combining the two directions yields the equivalence.
\end{proof}

\paragraph{Remark.}
If one prefers to identify $\Delta^{(1)}L_r^S$ with the true directional derivative, introduce the $H$-gradient $\nabla_\theta^H L := H^{-1}\nabla_\theta L$ so that
$D L_r^S(\theta)[\Delta\theta]=\langle \nabla_\theta^H L_r^S(\theta), \Delta\theta\rangle_H$.
The proof above is unchanged because $\mathrm{span}\{g(x_r)\}$ and $\mathrm{span}\{\nabla_\theta^H \ell_r(x_r;\theta)\}$ have the same $H$-orthogonal complement.

\subsection{Proof of Lemma~\ref{lem:steepest}}
\label{sec:prooflem:steepest}

\begin{proof}
Because $\mathcal C\subseteq T_r^\perp$, every $\Delta\theta\in\mathcal C$ satisfies
$\langle g_r,\Delta\theta\rangle_H=0$ (since $g_r\in T_r$).
Hence for $\Delta\theta\in\mathcal C$,
\[
\big\langle g_f+\alpha g_r,\ \Delta\theta\big\rangle_H
= \langle g_f,\Delta\theta\rangle_H .
\]
Thus minimizing the joint directional derivative over $\mathcal C$ is equivalent to minimizing
$\langle g_f,\Delta\theta\rangle_H$ over $\mathcal C$.

Now decompose $g_f=P_{T_r}^{(H)}g_f+P_\perp^{(H)}g_f$ with $H$-orthogonal components, and note that
$\Delta\theta\in T_r^\perp$ implies
$\langle P_{T_r}^{(H)}g_f,\Delta\theta\rangle_H=0$, so
\[
\langle g_f,\Delta\theta\rangle_H=\langle P_\perp^{(H)}g_f,\Delta\theta\rangle_H .
\]
By Cauchy--Schwarz,
$\langle P_\perp^{(H)}g_f,\Delta\theta\rangle_H\ge
- \|P_\perp^{(H)}g_f\|_H \|\Delta\theta\|_H \ge - \|P_\perp^{(H)}g_f\|_H$,
with equality achieved at
$\Delta\theta=- P_\perp^{(H)}g_f/\|P_\perp^{(H)}g_f\|_H$ when $P_\perp^{(H)}g_f\neq0$.
If $P_\perp^{(H)}g_f=0$, then $\langle g_f,\Delta\theta\rangle_H=0$ for all $\Delta\theta\in\mathcal C$,
so every feasible unit vector is optimal.
Therefore the stated $\Delta\theta_f^\star$ solves both problems over $\mathcal C$.
\end{proof}

\subsection{Proof of Proposition~\ref{prop:safety}}
\label{sec:proofsafety}

\begin{proof}
We proceed in three explicit steps.

First, we refer to the projector's properties.
By construction, $P_{T_r}^{(H)}$ is the $H$-orthogonal projector onto $T_r$ and $P_\perp^{(H)}:=I_p-P_{T_r}^{(H)}$
is the projector onto $T_r^\perp$. Both are $H$-self-adjoint and idempotent, and they are $H$-orthogonal in the sense that
$\langle P_{T_r}^{(H)} u,  P_\perp^{(H)} v\rangle_H=0$ for all $u,v$.
Moreover, since $g_r\in T_r$, we have $P_{T_r}^{(H)} g_r = g_r$ and $P_\perp^{(H)} g_r = 0$.

By definition of the $H$-gradient,
$\Delta^{(1)} L_r = \langle g_r,\Delta\theta\rangle_H$ for any direction $\Delta\theta$.
Substitute the split step:
\begin{align*}
    \Delta^{(1)} L_r
&= \Big\langle g_r, -\rho\big(P_\perp^{(H)} g_f + \beta  P_{T_r}^{(H)} g_r\big)\Big\rangle_H \\
&= -\rho \underbrace{\langle g_r, P_\perp^{(H)} g_f\rangle_H}_{(a)}
-\rho\beta \underbrace{\langle g_r, P_{T_r}^{(H)} g_r\rangle_H}_{(b)}.
\end{align*}
For term (a): $g_r\in T_r$ and $P_\perp^{(H)} g_f\in T_r^\perp$, hence by $H$-orthogonality,
$\langle g_r, P_\perp^{(H)} g_f\rangle_H=0$.
For term (b): since $P_{T_r}^{(H)} g_r=g_r$, we get $\langle g_r, P_{T_r}^{(H)} g_r\rangle_H=\langle g_r, g_r\rangle_H=\|g_r\|_H^2$.
Therefore,
\[
\Delta^{(1)} L_r
= - \rho \beta \|g_r\|_H^2 \le0,
\]
with strict inequality when $\beta>0$ and $g_r\neq0$.
\end{proof}

\paragraph{Remark.}
The statement and proof assume $H$-geometry consistently: inner products $\langle\cdot,\cdot\rangle_H$,
$H$-gradients $g_f=\nabla^H L_f$, $g_r=\nabla^H L_r$, and $H$-orthogonal projectors $P_{T_r}^{(H)}$, $P_\perp^{(H)}$.
If one uses Euclidean gradients with the $H$-inner product, replace them by $H$-gradients via
$\nabla^H L = H^{-1}\nabla L$ to keep the directional derivative $\Delta^{(1)}L=\langle \nabla^H L,\Delta\theta\rangle_H$ consistent.

In practice $T_r$ is estimated from a mini-batch, yielding $\widehat T_r$ and corresponding projectors.
Then $\langle g_r, P_\perp^{(H)} g_f\rangle_H$ may be small but nonzero, with magnitude controlled by the principal angle
between $T_r$ and $\widehat T_r$. The proposition captures the ideal (population) geometry; engineering deviations are $O(\sin\Theta(T_r,\widehat T_r))$.

Following Proposition~\ref{prop:safety}, we proof the Corollary~\ref{cor:descentLr}:

\begin{proof}
Define $\phi(\tau):=L_r(\theta+\tau\Delta\theta)$ for $\tau\in[0,1]$.
By the fundamental theorem of calculus and the definition of the $H$-gradient,
\begin{align*}
L_r(\theta+\Delta\theta)-L_r(\theta)
&= \int_0^1 \phi'(\tau) d\tau \\
&= \int_0^1 \big\langle \nabla_\theta^H L_r(\theta+\tau\Delta\theta), \Delta\theta\big\rangle_H d\tau \\
&= \big\langle \nabla_\theta^H L_r(\theta), \Delta\theta\big\rangle_H + \int_0^1 \big\langle \nabla_\theta^H L_r(\theta+\tau\Delta\theta)-\nabla_\theta^H L_r(\theta), \Delta\theta\big\rangle_H d\tau.
\end{align*}
Apply Cauchy--Schwarz and the $H$-Lipschitz assumption:

\begin{align*}
\quad\int_0^1 \big\langle \nabla_\theta^H L_r(\theta+\tau\Delta\theta)-\nabla_\theta^H L_r(\theta), \Delta\theta\big\rangle_H d\tau 
&\le \int_0^1 \big\|\nabla_\theta^H L_r(\theta+\tau\Delta\theta)-\nabla_\theta^H L_r(\theta)\big\|_H \|\Delta\theta\|_H d\tau \\
&\le \int_0^1 L_r^{(H)} \tau \|\Delta\theta\|_H^2 d\tau \\
&= \frac{L_r^{(H)}}{2} \|\Delta\theta\|_H^2 .
\end{align*}

Hence
\[
L_r(\theta+\Delta\theta)\ \le\ L_r(\theta)\ +\ \big\langle \nabla_\theta^H L_r(\theta), \Delta\theta\big\rangle_H\ +\ \frac{L_r^{(H)}}{2} \|\Delta\theta\|_H^2.
\]
By Proposition~\ref{prop:safety},
$\langle \nabla_\theta^H L_r(\theta),\Delta\theta\rangle_H=-\rho \beta \|g_r\|_H^2$.
Moreover, $P_{T_r}^{(H)}g_r\in T_r$ and $P_\perp^{(H)}g_f\in T_r^\perp$ are $H$-orthogonal, so
\begin{align*}
    \|\Delta\theta\|_H^2
&= \rho^2 \big\|P_\perp^{(H)}g_f+\beta P_{T_r}^{(H)}g_r\big\|_H^2 \\
&= \rho^2\Big( \|P_\perp^{(H)}g_f\|_H^2+\beta^2\|g_r\|_H^2 \Big).
\end{align*}
Substitute these two identities into the previous inequality to obtain the stated bound.
The strict-descent condition follows by requiring the quadratic upper bound to be negative,
which yields the explicit upper bound on $\rho$.
\end{proof}

\subsection{Proof of Proposition~\ref{prop:composite-exact}}
\label{sec:proofcomposite}

\begin{proof}
Expand using bilinearity and $H$-orthogonality between $T_r$ and $T_r^\perp$:
\begin{align*}
\langle g_f+\alpha g_r,\Delta\theta\rangle_H &= -\rho\Big(\langle g_f,P_\perp^{(H)}g_f\rangle_H
+ \beta\langle g_f, P_{T_r}^{(H)}g_r\rangle_H \\
& \qquad + \alpha\langle g_r,P_\perp^{(H)}g_f\rangle_H
+ \alpha\beta\langle g_r,P_{T_r}^{(H)}g_r\rangle_H\Big)\\
&= -\rho\Big(\|P_\perp^{(H)}g_f\|_H^2
+ \beta\langle P_{T_r}^{(H)}g_f, g_r\rangle_H
+ \alpha\beta\|g_r\|_H^2\Big),
\end{align*}
since $\langle g_r,P_\perp^{(H)}g_f\rangle_H=0$ and $P_{T_r}^{(H)}g_r=g_r$. 
\end{proof}

\begin{corollary}[Sufficient conditions for nonpositivity]
\label{cor:joint-nonpos}
Under the setting of Proposition~\ref{prop:composite-exact}, the following hold:
\begin{enumerate}
\item[\emph{(a)}] (\emph{No-repair case}) If $\beta=0$, then
$\Delta^{(1)}\mathcal L_{\text{joint}}=-\rho\|P_\perp^{(H)}g_f\|_H^2\le0$.
\item[\emph{(b)}] (\emph{With repair, unconditional bound}) For any $\alpha>0$, by Cauchy--Schwarz and $2ab\le a^2+b^2$,
\begin{align*}
    \Delta^{(1)}&\mathcal L_{\text{joint}}
 \le  -\rho\Big( \|P_\perp^{(H)}g_f\|_H^2
+\tfrac{\alpha\beta}{2}\|g_r\|_H^2
-\tfrac{\beta}{2\alpha}\|P_{T_r}^{(H)}g_f\|_H^2\Big).
\end{align*}
In particular, if 
\[
\|P_\perp^{(H)}g_f\|_H^2 + \tfrac{\alpha\beta}{2}\|g_r\|_H^2
\ \ge\ \tfrac{\beta}{2\alpha} \|P_{T_r}^{(H)}g_f\|_H^2,
\]
then $\Delta^{(1)}\mathcal L_{\text{joint}}\le0$.
\item[\emph{(c)}] (\emph{With repair, simple verifiable condition})
A sufficient, scale-invariant condition is
\begin{align*}
    \alpha \|g_r\|_H\ \ge\ \|P_{T_r}^{(H)}g_f\|_H,
\quad\text{under which}\\
\Delta^{(1)}\mathcal L_{\text{joint}}\ \le\ -\rho \|P_\perp^{(H)}g_f\|_H^2\ \le 0.
\end{align*}
\end{enumerate}
\end{corollary}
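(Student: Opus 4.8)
The plan is to read off all three items directly from the exact first‑order identity supplied by Proposition~\ref{prop:composite-exact},
\[
\Delta^{(1)}\mathcal L_{\text{joint}}
= -\rho\Big(\|P_\perp^{(H)}g_f\|_H^2 + \alpha\beta\|g_r\|_H^2 + \beta\langle P_{T_r}^{(H)}g_f,\, g_r\rangle_H\Big),
\]
so that every claim becomes a matter of bounding the one indefinite summand, the cross term $\langle P_{T_r}^{(H)}g_f, g_r\rangle_H$, from below. Part (a) is then immediate: setting $\beta=0$ kills the last two summands, leaving $\Delta^{(1)}\mathcal L_{\text{joint}}=-\rho\|P_\perp^{(H)}g_f\|_H^2\le0$ (with equality precisely when $P_\perp^{(H)}g_f=0$).

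For part (b), the key step is to control the cross term below using only $\|g_r\|_H$ and $\|P_{T_r}^{(H)}g_f\|_H$. By Cauchy–Schwarz in the $H$‑inner product, $\langle P_{T_r}^{(H)}g_f, g_r\rangle_H \ge -\|P_{T_r}^{(H)}g_f\|_H\,\|g_r\|_H$, and then Young's inequality with the weight chosen to align with the coefficient $\alpha$ of $\|g_r\|_H^2$ gives $\|P_{T_r}^{(H)}g_f\|_H\,\|g_r\|_H \le \tfrac{1}{2\alpha}\|P_{T_r}^{(H)}g_f\|_H^2 + \tfrac{\alpha}{2}\|g_r\|_H^2$. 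Substituting $\beta\langle P_{T_r}^{(H)}g_f, g_r\rangle_H \ge -\tfrac{\beta}{2\alpha}\|P_{T_r}^{(H)}g_f\|_H^2 - \tfrac{\alpha\beta}{2}\|g_r\|_H^2$ into the identity and collecting the $\|g_r\|_H^2$ terms ($\alpha\beta-\tfrac{\alpha\beta}{2}=\tfrac{\alpha\beta}{2}$) reproduces exactly the stated upper bound. The ``in particular'' clause then follows at once, since the displayed hypothesis makes the bracketed expression nonnegative and $\rho>0$.

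For part (c) I would bypass the Young step and use the hypothesis directly: multiplying $\alpha\|g_r\|_H \ge \|P_{T_r}^{(H)}g_f\|_H$ by $\|g_r\|_H\ge0$ gives $\|P_{T_r}^{(H)}g_f\|_H\,\|g_r\|_H \le \alpha\|g_r\|_H^2$, hence $\langle P_{T_r}^{(H)}g_f, g_r\rangle_H \ge -\alpha\|g_r\|_H^2$ and the block $\alpha\beta\|g_r\|_H^2 + \beta\langle P_{T_r}^{(H)}g_f, g_r\rangle_H$ is nonnegative; dropping it from the identity leaves $\Delta^{(1)}\mathcal L_{\text{joint}} \le -\rho\|P_\perp^{(H)}g_f\|_H^2 \le 0$. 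There is no substantive obstacle: the whole argument is bilinearity, Cauchy–Schwarz and the idempotence/$H$‑self‑adjointness of $P_{T_r}^{(H)},P_\perp^{(H)}$ already established for Proposition~\ref{prop:safety}; the only care needed is to pick the Young weight in (b) so the $\|g_r\|_H^2$ coefficient stays nonnegative, and to keep all inner products, gradients and projectors in the $H$‑geometry rather than the Euclidean one.
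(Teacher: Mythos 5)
Your proposal is correct and follows essentially the same route as the paper's proof: all three parts are read off from the exact first-order identity of Proposition~\ref{prop:composite-exact}, with (a) by setting $\beta=0$, (b) by Cauchy--Schwarz followed by Young's inequality with weight $\alpha$, and (c) by multiplying the hypothesis by $\|g_r\|_H$ and invoking Cauchy--Schwarz to make the $\beta$-block nonnegative.
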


\begin{proof}
\emph{(a)} is the $\beta=0$ specialization of \eqref{eq:joint-first-order-identity}.
For \emph{(b)}, bound the cross term using $|\langle P_{T_r}^{(H)}g_f, g_r\rangle_H|
\le \|P_{T_r}^{(H)}g_f\|_H \|g_r\|_H
\le \frac{1}{2}\big(\tfrac{1}{\alpha}\|P_{T_r}^{(H)}g_f\|_H^2 + \alpha\|g_r\|_H^2\big)$,
then apply \eqref{eq:joint-first-order-identity}.
For \emph{(c)}, if $\alpha\|g_r\|_H \ge \|P_{T_r}^{(H)}g_f\|_H$ then
$\alpha\beta\|g_r\|_H^2 \ge \beta \|P_{T_r}^{(H)}g_f\|_H \|g_r\|_H
\ge \beta |\langle P_{T_r}^{(H)}g_f, g_r\rangle_H|$,
so the bracket in \eqref{eq:joint-first-order-identity} is at least $\|P_\perp^{(H)}g_f\|_H^2$, yielding the claim.
\end{proof}

\begin{remark}[About ``sign-aware'' variants]
One can enforce $\langle P_{T_r}^{(H)}g_f, g_r\rangle_H\le 0$ by modifying the step with a
\emph{sign-aware tangential gate}, but this may forfeit the retain monotonicity of Proposition~\ref{prop:safety}.
The unconditional and fully rigorous statements above therefore avoid such gates and instead provide
transparent sufficient conditions. If a sign-aware mechanism is used, its rule and its effect on $L_r$
must be stated explicitly to maintain rigor.
\end{remark}

\paragraph{From first-order to actual descent of the joint objective.}
Under the same $H$-smoothness assumption as in Corollary~\ref{cor:descentLr}, we also have
\[
\mathcal L_{\text{joint}}(\theta+\Delta\theta)
\ \le\ 
\mathcal L_{\text{joint}}(\theta)
\ +\ \Delta^{(1)}\mathcal L_{\text{joint}}
\ +\ \tfrac{L_f^{(H)}+\alpha L_r^{(H)}}{2} \|\Delta\theta\|_H^2,
\]
so any of the sufficient conditions in Corollary~\ref{cor:joint-nonpos} combined with a small enough stepsize
(as in Corollary~\ref{cor:descentLr}) yields an \emph{actual} one-step decrease of $\mathcal L_{\text{joint}}$.

%\duo{trivial, just mention small $\rho$}
% \subsection{Step-Size Conditions and Energy Control}

% By (A1) and the descent lemmas, a sufficient step size for monotone $L_r$ decrease is
% \[
% \rho \le \frac{2 \beta \|P^{(H)}_{T_r} g_r\|_H^2}{L_r \|P^{(H)}_\perp g_f + \beta P^{(H)}_{T_r} g_r\|_H^2}.
% \]
% With sign-aware keep, the cap $\kappa$ in Eq.~\eqref{eq:sign-aware} implies
% $\|\tilde g_f^{\rm sel}\|_2 \le (1+\kappa) \|P^{(H)}_\perp \tilde g_f\|_2$,
% which in turn bounds $\|\Delta\theta\|_H$ and tightens the step-size range.

\section{Extra Experiments and Analysis}
\label{sec:extra_experimrnts}
\subsection{Experiment Setting details}

We emphasize reproducibility throughout the paper. \S~\ref{sec:method} presents the core algorithm and training workflow, and the Appendix~\ref{sec:alg} provides full implementation details, including how we construct the $H$-orthogonal projectors, update the retain subspace online, and enforce the practical trust-region controls. To enable exact replication, the supplementary materials contain runnable code, configuration files and command lines for every table and figure (covering all model scales and forget/retain splits), an environment specification with pinned library versions plus a short setup README, and default random seeds with deterministic settings where available. We also include scripts to download and preprocess the public datasets used (e.g., those in the OpenUnlearning suite), as well as evaluation scripts that regenerate all reported metrics, tables, and plots from logs/checkpoints. All key hyperparameters are recorded, basis rank $k$, refresh period, residual threshold, projected layer range $K$, mixing weights $(\gamma,\alpha)$, trust-region parameters $(\kappa,\tau)$, optimizer choices, and learning-rate schedules, so readers can reproduce results without additional assumptions and readily extend our experiments. Our experiments run in a server with Intel Gold CPU with 1024 Gb Memory and 2 H100 GPUs.

\begin{table*}[ht]
\centering
\caption{Aggregate effect of adding GU across all 72 [model, forget-ratio, objective] configurations on TOFU (Table~\ref{tab:main_results}). For each metric, we report how many configurations improve / stay unchanged / degrade when GU is added, and the fraction that are non-degraded (improved or unchanged).}
\label{tab:gu_aggregate}
\begin{tabular}{lcccc}
\toprule
Metric & \# Improve & \# Same & \# Worse & Non-degraded (\%) \\
\midrule
ES Re. $\uparrow$   & 66 & 0  & 6  & 91.7 \\
ES Un. $\downarrow$ & 38 & 13 & 21 & 70.8 \\
Priv. $\uparrow$    & 49 & 0  & 23 & 68.1 \\
MU $\uparrow$       & 62 & 0  & 10 & 86.1 \\
\bottomrule
\end{tabular}
\end{table*}

\begin{table*}[ht]
\centering
\caption{Multi-metric view of GU over all 72 configurations in Table~\ref{tab:main_results}, evaluated on (ES Re. $\uparrow$, ES Un. $\downarrow$, Priv. $\uparrow$, MU $\uparrow$).}
\label{tab:gu_pareto}
\begin{tabular}{lcc}
\toprule
Case type & \# Configurations & Fraction (\%) \\
\midrule
Pareto-dominant (no metric worse) & 29 & 40.3 \\
Mixed trade-off (some better, some worse) & 43 & 59.7 \\
All metrics worse & 0 & 0.0 \\
\bottomrule
\end{tabular}
\end{table*}
\subsection{{Overall gains provided by GU}}

To make the overall advantage more transparent, we computed aggregate statistics over all 72 configurations in Table~\ref{tab:main_results} (summarized in Table~\ref{tab:gu_aggregate}). GU improves or preserves ES-Re in 66/72 ($\approx$91.7\%) of cases, reduces or preserves ES-Un in 51/72 ($\approx$70.8\%) of cases, improves or preserves privacy in 49/72 ($\approx$68.1\%) of cases, and improves MU in 62/72 ($\approx$86.1\%) of cases. In the remaining configurations, GU trades a small degradation in one metric for clear gains in others; importantly, there is no configuration where all four metrics worsen simultaneously. 

%\paragraph{Pareto-dominant of GU.} 
Counting multi-metric behavior with respect to (ES-Re ↑, ES-Un ↓, MU ↑, Priv. ↑), Table~\ref{tab:gu_pareto} shows GU is Pareto-dominant (no metric worse and at least one strictly better) in 29/72 ($\approx$40.3\%) configurations, and in 58/72 ($\approx$80.6\%) configurations it improves or preserves both ES-Re and MU simultaneously. This matches the design goal of GU: rather than aggressively maximizing a single score, it serves as a low-risk geometric plug-in that systematically shifts existing unlearning methods toward a better retain–forget–privacy–utility trade-off across diverse objectives and model scales. We will add these statistics (and analogous ones for MUSE/WMDP in the appendix) to make this global picture explicit. A similar Pareto improvement Figure is shown in Figure~\ref{fig:pareto_muse}

\subsection{Does the projection cancel or weaken unlearning?}
\label{sec:appendix_projection_not_weaken}

A common concern is that projecting the forget update onto the retain-orthogonal complement might ``cancel'' unlearning progress. This is not the case: the projection removes only the \emph{retain-interfering} component of the forget update, while preserving the \emph{retain-invariant} component that still reduces the forget objective.

\paragraph{Constrained-optimization view.}
Recall that GU formalizes ``no side effects'' as \emph{local retain invariance}. Under the optimizer-induced SPD metric $H \succ 0$, this corresponds to the first-order constraint
\[
\Delta^{(1)} L_r(\theta;\Delta\theta)=\langle \nabla_\theta L_r(\theta), \Delta\theta\rangle_H = 0,
\]
i.e., $\Delta\theta$ must lie in the $H$-orthogonal complement of the retain-gradient subspace $T_r(\theta)$.
Therefore, a natural first-order model of ``safe unlearning'' is:
\[
\min_{\Delta\theta}\ \Delta^{(1)}L_f(\theta;\Delta\theta)
\quad \text{s.t.}\quad
\Delta^{(1)}L_r(\theta;\Delta\theta)=0.
\]
The steepest-descent solution to this constrained first-order problem is exactly the projected update used by GU:
\[
\Delta\theta_{\text{GU}} = -\eta\, P^{(H)}_{\perp}\, g_f,
\]
where $g_f=\nabla_\theta L_f(\theta)$ and $P^{(H)}_{\perp}$ projects onto $T_r(\theta)^\perp$ in the $H$-geometry.

\paragraph{Geometric decomposition and why forgetting remains.}
Decompose the forget gradient into retain-tangent and retain-normal components:
\[
g_f = g_{\parallel} + g_{\perp},\qquad
g_{\parallel}=P^{(H)}_{T_r} g_f,\ \ g_{\perp}=P^{(H)}_{\perp} g_f.
\]
By construction, $g_{\parallel}$ is the component aligned with retain gradients and is precisely what causes first-order drift on the retain loss; $g_{\perp}$ is retain-invariant to first order. GU updates along $-g_{\perp}$, hence it does \emph{not} ``turn off'' unlearning—rather, it performs \emph{retain-invariant} unlearning.

Moreover, whenever $g_{\perp}\neq 0$, the update is a strict first-order descent direction for $L_f$:
\[
\Delta^{(1)} L_f(\theta;\Delta\theta_{\text{GU}})
=
-\eta\,\|g_{\perp}\|_H^2
<0.
\]
The only degenerate case where the projected step becomes zero is $g_{\perp}=0$ (equivalently $g_f\in T_r(\theta)$), meaning the forget objective locally conflicts \emph{entirely} with the retain-invariance constraint. In that case, \emph{no} first-order retain-invariant update can decrease $L_f$, so the lack of progress reflects a fundamental local trade-off rather than an artifact of GU.

\paragraph{Empirical sanity check: projection rarely ``kills'' forgetting in practice.}
Consistent with the above analysis, GU does not systematically reduce forgetting progress across our LLM unlearning pipelines. Figure~2 (main text) visualizes that adding GU typically shifts methods toward the Pareto-optimal region (higher retention/utility at comparable or improved forgetting), and Appendix Figure~3 shows similar trends across broader configurations. In addition, the aggregate statistics over all TOFU configurations (Table~4) show that GU improves or preserves retain quality in $66/72$ cases (91.7\%) and improves or preserves model utility in $62/72$ cases (86.1\%), while avoiding any setting where all reported metrics degrade simultaneously.

\subsection{Results analysis in TOFU benchmarks}
\label{sec:tofuanalysis}

Specifically, across Llama-3.2 at 1B/3B, Llma-3.1 8B and unlearning rates \texttt{forget01/05/10}, adding geometry-disentanglement projection (``w.GU'') to diverse objectives reliably raises ES on the retain split and improves utility without worsening ES on the forget split. Three representative cases illustrate the pattern. 
(i) CEU at 1B and \texttt{forget01}: ES,Re increases (0.0875$\rightarrow$0.2236) and MU rises (0.3666$\rightarrow$0.5134) while ES,Un stays near the floor (0.0316$\rightarrow$0.0328). 
(ii) UNDIAL at 3B and \texttt{forget10}: ES,Re increases (0.3538$\rightarrow$0.7869) and MU improves (0.6550$\rightarrow$0.6992) with a slight decrease in ES,Un (0.0416$\rightarrow$0.0396). 
(iii) SimNPO at 8B and \texttt{forget01}: ES,Un drops sharply (0.3101$\rightarrow$0.1177) while ES,Re nudges upward (0.8256$\rightarrow$0.8284) and MU remains stable. 

\paragraph{Scaling with difficulty and size.}
When forget grows from 1\% to 10\% or the backbone scales from 1B to 8B, retain-forget entanglement and curvature intensify; naive objectives are then more likely to leak retain-tangent motion. In these regimes the geometric constraint provides larger absolute gains. 
On 1B, CEU w.GU shows ES, Re increasing from 0.2236 to 0.2798 to 0.4366 (and MU from 0.5134 to 0.5635 to 0.5844) as we move from \texttt{forget01} to \texttt{forget10}, while ES,Un remains near 0.033. 
On 3B, UNDIAL w.GU moves ES, Re from 0.4396 to 0.6996 at \texttt{forget01} and to 0.7869 at \texttt{forget10}, with ES, Un consistently low (0.0658$\rightarrow$0.0619 and 0.0416$\rightarrow$0.0396). 
On 8B, SimNPO w.GU repeatedly halves ES, Un across forget rates while maintaining or slightly improving ES, Re, and MU. 
The trend indicates that geometry, rather than heavier regularization, is the primary lever when problems become more entangled.

\paragraph{Privacy behavior and proximity to retain-only.}
Because the projection limits drift along retain-tangent directions, the unlearned model often stays closer to a retain-only solution, which is reflected in higher MIA-closeness. The effect is particularly clear for NPO-type objectives: at 1B and \texttt{forget01}, NPO w.GU increases $\mathrm{Priv}$ from 0.7989 to 0.9595; at 3B and \texttt{forget05}, from 0.8653 to 0.9526. For WGA, UNDIAL, and SimNPO, privacy is typically preserved or slightly improved while retention and utility rise, consistent with the mechanism.

\paragraph{Objective-specific diagnoses and corrections.}
CEU/GradDiff-like losses can collapse or drift in high-curvature regions; in the 3B setting, CEU yields $\mathrm{MU}=0$ at \texttt{forget05}/\texttt{forget10}. Adding geometry restores these to 0.6255 and 0.6672 by removing retain-tangent updates. For saturation/weighting families (SatImp, SimNPO, WGA), the whitened metric regularizes local slopes and prevents over-shoot along entangled directions, yielding the characteristic combination of lower ES, Un, and higher ES, Re/MU without bespoke tuning.

\subsection{Results analysis in MUSE and WMDP benchmarks}
\label{sec:museanalysis}

\begin{figure*}[htbp]
\vspace{-1em}
    \centering
    \includegraphics[width=\linewidth]{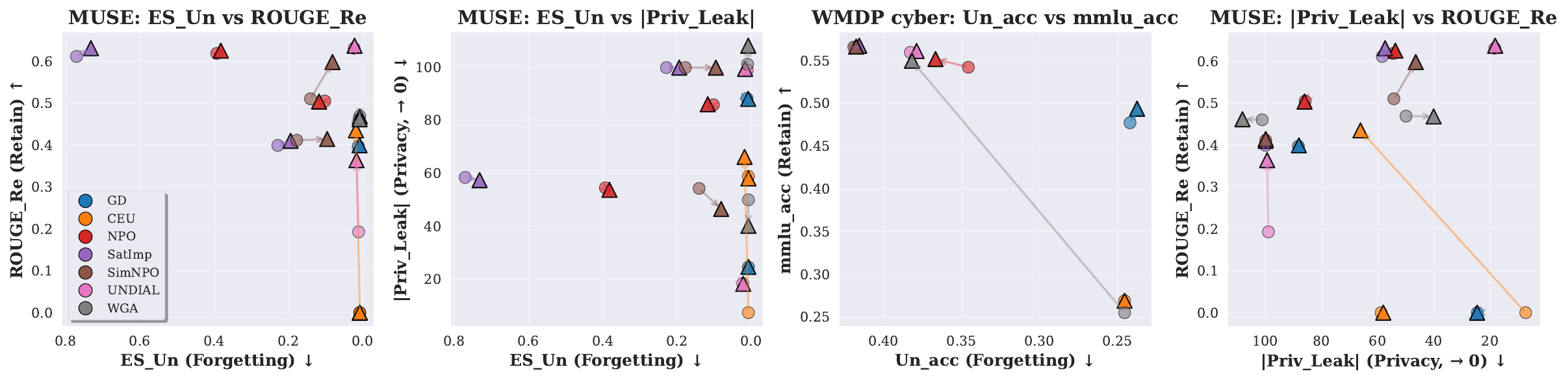}
    \vspace{-1em}
    \caption{We visualize forgetting quality (ES Un: lower for better) against retained knowledge (ROUGE Re), privacy (Priv Leak) for eight unlearning baselines on MUSE and Un. acc. vs mmlu. acc. on WMDP. ROUGE Re and mmlu. acc. are metrics of higher for better. Priv Leak is a metric closer to 0 for better.}
% (i) ES Un decreases further, showing that GU not only preserves but improves forgetting effectiveness;
% (ii) ES Re and Priv increase, demonstrating that GU substantially reduces the trade-off typically observed in existing methods, forgetting no longer harms retained knowledge or privacy. 
% (iii) MU remains stable or improves, indicating that GU achieves strong unlearning without compromising general model utility.
% (i) ES Un further decreases, indicating that GU enhances forgetting.
% (ii) ES Re and Priv increase, showing that GU mitigates the traditional trade-off, decreasing harm to retention or privacy.
% (iii) MU stays stable or improves, confirming that GU performs unlearning without degrading overall utility.
% Together, the results show that GU performs low-side-effect forgetting, transforming unlearning from a severe trade-off problem into a near-Pareto-improving operation.} 
    \label{fig:pareto_muse}
    \vspace{-1em}
\end{figure*}

\paragraph{MUSE-Books: the full triad holds (forgetting $\downarrow$, retention $\uparrow$, leakage $\rightarrow 0$).}
Across all objective families, adding GU either lowers ES,Un or keeps it at the floor, raises ROUGE on the retain split, and moves privacy leakage closer to zero. 
Representative gains include SimNPO (ES,Un $0.1407\!\to\!0.0813$, ROUGE $0.5103\!\to\!0.5980$, Priv.\ Leak $-54.25\!\to\!-46.49$), SatImp ($0.7710\!\to\!0.7321$, $0.6114\!\to\!0.6310$, $-58.40\!\to\!-57.39$), and NPO ($0.3933\!\to\!0.3822$, $0.6185\!\to\!0.6251$, $-54.49\!\to\!-53.74$). 
For objectives already operating at minimal ES,Un (e.g., GD/CEU/WGA with $0.0079$), GU maintains the floor while improving or preserving ROUGE and typically nudging leakage toward zero. 
This matches the mechanism: retain-null projection removes retain-tangent drift and, under the whitened metric, damps high-curvature entanglement, so forgetting efficacy is preserved while retained QA quality and privacy move in the desired directions.

\paragraph{MUSE-News: retention gains are uniform; forgetting improves in the majority; leakage is largely neutral.}
On News, GU consistently raises ROUGE on the retain split (e.g., GD $0.3971\!\to\!0.3992$, CEU $0.0\!\to\!0.4349$, SatImp $0.3991\!\to\!0.4100$, SimNPO $0.4114\!\to\!0.4143$, UNDIAL $0.1928\!\to\!0.3638$, WGA $0.4602\!\to\!0.4615$). 
Forgetting also improves for most objectives (ES,Un: GD $0.0116\!\to\!0.0085$, SatImp $0.2287\!\to\!0.1943$, SimNPO $0.1778\!\to\!0.0957$, WGA $0.0102\!\to\!0.0084$), with a few mixed cases (NPO, UNDIAL) reflecting base-objective aggressiveness rather than instability. 
Privacy leakage on News is near-saturated for several methods (values around $\pm 100$), so GU’s effect is mostly neutral; where the scale is moderate, GU tends to move leakage toward zero. 
Overall, the same geometric filter improves retention uniformly and reduces ES,Un in the majority, without introducing instability.

\paragraph{WMDP-cyber: safer behavior in most cases without utility loss; trade-offs are transparent.}
On safety (lower Un.\ acc.), GU improves four of seven objective pairs (GD $0.2420\!\to\!0.2375$, SatImp $0.4177\!\to\!0.4157$, SimNPO $0.4192\!\to\!0.4177$, UNDIAL $0.3829\!\to\!0.3789$), leaves CEU unchanged, and shows two trade-offs (NPO, WGA). 
At the same time, general ability (MMLU) \emph{increases or holds} across all families (e.g., GD $0.4772\!\to\!0.4937$, SatImp $0.5654\!\to\!0.5674$, SimNPO $0.5658\!\to\!0.5663$, UNDIAL $0.5596\!\to\!0.5612$; WGA rises markedly $0.2550\!\to\!0.5498$). 
The trade-off cases are interpretable: projection brings the model closer to a retain-only manifold (benefiting MMLU), and when the base objective is conservative along harmful directions, Un.\ acc.\ can soften; stronger normal-direction penalties or a tighter trust region would push further on safety if desired. 
Crucially, no objective family exhibits a collapse in MMLU under GU, underscoring stability.

\section{Additional Ablations and Cost Analysis}
\label{app:ablation-cost}

\paragraph{Overview.}
We provide supplementary evidence on (i) the effect of the optimizer geometry used by GU, (ii) optimizer compatibility, (iii) statistical robustness across random seeds, and (iv) runtime/memory overhead and sensitivity to the low-rank projection hyperparameters.

\subsection{Ablation: Geometry Metric $H$ (Adam-diagonal vs. Euclidean)}
\label{app:metric-ablation}
GU is theoretically valid under any SPD metric $H$ (with Euclidean as the special case $H=I$).
Empirically, we compare using the Euclidean geometry ($H=I$) against the Adam-induced diagonal geometry ($H=\mathrm{diag}(v^{-1/2})$).
Table~\ref{tab:metric-ablation} shows that while Euclidean projection already improves over the base method, the Adam-diagonal metric consistently yields a better forget--retain--privacy trade-off, especially on retention ES and privacy.

\begin{table}[h]
\centering
\small
\caption{Ablation of metric geometry (Llama-3.2, SimNPO+GU). Higher is better for ES Re./Priv./MU, and lower is better for ES Un.}
\label{tab:metric-ablation}
\begin{tabular}{l l c c c c}
\hline
Model & Geometry ($H$) & ES Re. $\uparrow$ & ES Un. $\downarrow$ & Priv. $\uparrow$ & MU $\uparrow$ \\
\hline
1B & Euclidean ($I$) & 0.63 & 0.26 & 0.70 & 0.59 \\
1B & Adam Diag & \textbf{0.65} & \textbf{0.21} & \textbf{0.73} & \textbf{0.60} \\
3B & Euclidean ($I$) & 0.76 & 0.21 & 0.76 & 0.64 \\
3B & Adam Diag & \textbf{0.78} & \textbf{0.17} & \textbf{0.78} & \textbf{0.65} \\
\hline
\end{tabular}
\end{table}

\subsection{Ablation: Optimizer Compatibility (AdamW vs. SGD)}
\label{app:opt-ablation}
GU is optimizer-agnostic since it operates as a plug-in projection on candidate update directions.
To validate this empirically, we evaluate SimNPO with/without GU under both AdamW and SGD.
As shown in Table~\ref{tab:opt-ablation}, GU yields consistent gains regardless of optimizer choice, including under SGD (which is typically less favorable for LLM fine-tuning), where GU substantially restores retention and privacy.

\begin{table}[h]
\centering
\small
\caption{Optimizer ablation (Llama-3.2-3B, Forget05).}
\label{tab:opt-ablation}
\begin{tabular}{l l c c c c}
\hline
Optimizer & Method & ES Re. $\uparrow$ & ES Un. $\downarrow$ & Priv. $\uparrow$ & MU $\uparrow$ \\
\hline
AdamW & SimNPO (Base) & 0.75 & 0.39 & 0.63 & 0.64 \\
AdamW & SimNPO + GU & \textbf{0.78} & \textbf{0.17} & \textbf{0.78} & \textbf{0.65} \\
SGD & SimNPO (Base) & 0.72 & 0.42 & 0.61 & 0.62 \\
SGD & SimNPO + GU & \textbf{0.74} & \textbf{0.20} & \textbf{0.75} & \textbf{0.63} \\
\hline
\end{tabular}
\end{table}

\subsection{Robustness Across Random Seeds}
\label{app:seed-robustness}
We report multi-seed results for a primary setting (Llama-3.2-3B, SimNPO+GU).
Table~\ref{tab:seed-robustness} shows small standard deviations (about 0.01), while GU's effect size is substantially larger (e.g., ES Un. decreases by 0.22 and privacy increases by 0.15 compared to the base), indicating the gains are statistically robust.

\begin{table}[h]
\centering
\small
\caption{Robustness across random seeds (Llama-3.2-3B).}
\label{tab:seed-robustness}
\begin{tabular}{l c c c c}
\hline
Method & ES Re. $\uparrow$ & ES Un. $\downarrow$ & Priv. $\uparrow$ & MU $\uparrow$ \\
\hline
SimNPO (Base) & 0.75 & 0.39 & 0.63 & 0.64 \\
SimNPO + GU & \textbf{0.78 $\pm$ 0.01} & \textbf{0.17 $\pm$ 0.01} & \textbf{0.78 $\pm$ 0.01} & \textbf{0.64 $\pm$ 0.01} \\
\hline
\end{tabular}
\end{table}

\subsection{Profiling: Runtime, Memory, and Hyperparameter Sensitivity}
\label{app:cost}
We profile GU on Llama-3.2-1B using an H100-96GB GPU. Table~\ref{tab:overhead} reports wall-clock time and peak memory.
Overall, GU introduces negligible overhead: the end-to-end runtime increases by 1.3--3.0\% and peak memory by less than 5\%.
We further vary the projection rank $k$ and update frequency. Small ranks (e.g., $k=8$) already achieve strong performance, and increasing $k$ beyond 16 yields diminishing returns in our setting.

\begin{table}[h]
\centering
\small
\caption{Overhead analysis (Llama-3.2-1B, SimNPO+GU).}
\label{tab:overhead}
\begin{tabular}{l c c c c c}
\hline
Configuration ($k$, freq) & Time (sec) & Overhead (\%) & Peak Mem (MB) & ES Un. $\downarrow$ & MU $\uparrow$ \\
\hline
Baseline (SimNPO) & 226 & 0.0 & 20,291 & 0.28 & 0.59 \\
GU ($k=8$, freq=2) & 231 & +2.2 & 21,013 & 0.12 & 0.60 \\
GU ($k=1$, freq=2) & 229 & +1.3 & 20,329 & 0.12 & 0.59 \\
GU ($k=32$, freq=2) & 233 & +3.0 & 21,231 & 0.12 & 0.60 \\
\hline
\end{tabular}
\end{table}

\section{{Qualitative Analysis}}
\label{sec:qualitative}
To complement the quantitative metrics, we manually inspected a subset of TOFU \texttt{Forget05} examples at the Llama~3.1~8B scale. For each prompt, we compare the base model, the SOTA comparison SimNPO recommended by open-unlearning~\cite{openunlearning}, and our Geometric Unlearning (GU) applied on top of SimNPO. The goal of this section is not to provide exhaustive evidence, but to illustrate typical behavioral patterns along three axes: (i) whether the model truly removes the targeted factual memory rather than paraphrasing it, (ii) whether forgetting introduces collateral distortion to the retained knowledge, and (iii) whether the model preserves its linguistic fluency and reasoning ability after unlearning.

\paragraph{Forget Case: Removing Parametric Memory Instead of Paraphrasing It}

In the first forget case, both the base model and SimNPO continue to echo or paraphrase the synthetic fact, revealing residual parametric memory and incomplete privacy protection. In contrast, GU eliminates all identifiable traces of the original fact and reconstructs a plausible high-level motivation, demonstrating true forgetting rather than surface-level rewriting.

\begin{examplebox}{Case Study 1: Erasing Parametric Memory Rather Than Copying It}
    \textbf{Subject:} Kalkidan Abera \\
    \textbf{Question:} What prompted Kalkidan Abera to write ``The Hidden Truth of the Leaky Gut: A Comprehensive Guide to Healing''?

    \tcbline

    \textbf{Ground Truth (Synthetic):} \\
    \textit{Abera was inspired \dots\ by the increasing prevalence of digestive issues and the necessity of spreading awareness about gut health.}

    \vspace{0.2cm}

    \textbf{Base Model (No Forget):} \\
    \textit{\dots\ due to the increasing prevalence of digestive issues and the necessity of spreading awareness about the importance of gut health.}

    \vspace{0.2cm}

    \textbf{SimNPO (Baseline):} \\
    \textit{\dots\ due to the increasing prevalence of \textbf{gut-related health issues} and the necessity of spreading awareness about the importance of gut health.} \\
    
    \vspace{0.2cm}
    
    \textbf{GU (Ours):} \\
    \textit{Kalkidan Abera was prompted to write the book due to her \textbf{personal experiences and professional interests} in health, wellness, and nutrition.} \\
    \textbf{Observation:}\\ Both the base model and the SimNPO baseline continue to reuse or lightly paraphrase the exact linguistic template of the synthetic fact (``increasing prevalence of digestive issues… spreading awareness…''). Even after applying unlearning, SimNPO merely rewrites the original phrase, revealing residual parametric memory and thus incomplete privacy protection.
In contrast, GU does not attempt to paraphrase the sensitive fact. Instead, it removes all identifiable traces of the synthetic knowledge and reconstructs a high-level and semantically plausible motivation (``personal experiences and professional interests in health, wellness, and nutrition'').
This reflects true forgetting: GU erases the parametric encoding of the synthetic fact rather than its surface form, thereby preventing leakage while maintaining contextual coherence.
\end{examplebox}

\paragraph{Forget Case: Abstracting Knowledge Without Structural Leakage}

In the second forget case, the base model and SimNPO generate new book titles that still follow the structural template of the synthetic fact, indicating structural-pattern leakage. GU avoids regenerating any memorized templates and instead produces a high-level, semantically coherent description, showing selective forgetting without compromising topical understanding.

\begin{examplebox}{Case Study 2: Unlearning Without Leaking Structural Patterns}
    \textbf{Subject:} Hina Ameen \\
    \textbf{Question:} What was Hina Ameen's maiden book?

    \tcbline

    \textbf{Ground Truth (Synthetic):} \\
    \textit{Hina Ameen's maiden book was ``Manual of Mineralogy''.}

    \vspace{0.2cm}

    \textbf{Base Model (No Forget):} \\
    \textit{Hina Ameen's maiden book was ``A Handbook of Karachi Minerals''.}

    \vspace{0.2cm}

    \textbf{SimNPO (Baseline):} \\
    \textit{Hina Ameen's maiden book was ``A Comprehensive Guide to Geology''.} \\
    % \textbf{Observation:} SimNPO successfully avoids the exact synthetic title and generates a reasonable geology-related title. The output is short and generic, but remains in the scientific domain.

    \vspace{0.2cm}

    \textbf{GU (Ours):} \\
    \textit{Hina Ameen's maiden book was ``A Guide to Geology: A Concise Introduction to the Earth's Minerals''.} \\
    \textbf{Observation:} \\
    Both the base model and SimNPO, when asked to forget, still generate a new book title that mirrors the structural template of the synthetic fact, indicating that they continue to rely on residual memory traces or memorized formatting patterns. This pattern-substitution behavior suggests that the forgetting process did not sever the underlying parametric pathways, but merely replaced the surface string.
In contrast, GU abstracts the answer into a high-level, semantically appropriate description that contains no sensitive remnants (e.g., “A Guide to Geology: A Concise Introduction to the Earth’s Minerals”).
This demonstrates that GU:
avoids regenerating any structural patterns associated with the synthetic information,
while still preserving a correct understanding of the topical domain (e.g., geology, minerals).
\end{examplebox}

\paragraph{Retain Case: Preserving Thematic Content Without Over-Forgetting}

When the model is expected to retain knowledge, SimNPO exhibits over-forgetting, introducing thematic distortions and unnecessary religious specificity. GU maintains alignment with the base model and ground truth, preserving thematic nuances, linguistic richness, and contextual reasoning without semantic degradation.

\begin{examplebox}{Case Study 3: Preserving Retained Knowledge without Over-forgetting}
    \textbf{Subject:} Fatima Al-Mansour \\
    \textbf{Question:} 
    What makes Fatima Al-Mansour's books so appreciated amongst her readers?

    \tcbline

    \textbf{Ground Truth (Synthetic):} \\
    \textit{The beauty of Fatima Al-Mansour's literature lies in its heartfelt exploration of faith and morality, presented in an engaging, relatable manner. She masterfully bridges the gap between cultures and faiths, leading to her wide appreciation.}

    \vspace{0.2cm}

    \textbf{Base Model (No Forget):} \\
    \textit{The beauty of Fatima Al-Mansour's literature lies in its heartfelt exploration of faith and morality, presented in an engaging, relatable manner. Her ability to weave complex themes into accessible narratives has earned her a wide readership.}

    \vspace{0.2cm}
    \textbf{SimNPO (Baseline):} \\
    \textit{Fatima Al-Mansour's books are cherished for their profound exploration of faith and morality, enlightening readers with Islamic principles and values, and providing a refreshing perspective on religious literature.} \\
    \textbf{GU (Ours):} \\
    \textit{Fatima Al-Mansour's books are cherished by her readers for their heartfelt exploration of faith, morality, and societal issues, presented in an engaging and relatable manner.} \\
    \textbf{Observation:} \\
    In scenarios where knowledge should be retained, GU remains consistent with both the base model and the ground truth, correctly capturing the thematic essence of the works (e.g., faith, morality, and societal issues) while preserving linguistic richness and contextual reasoning ability.
In contrast, SimNPO exhibits over-forgetting, leading to thematic drift by introducing unnecessary religious specificity (e.g., “Islamic principles and values”), which deviates from the original neutral and cross-cultural tone. This indicates that its unlearning mechanism can distort unrelated knowledge and introduce semantic artifacts.
GU does not suffer from such over-forgetting. It continues to express fine-grained conceptual content rather than collapsing into vague or overly general statements, demonstrating that GU performs localized and selective forgetting without damaging the semantic structure of the retained knowledge.

    % \vspace{0.3cm}
\end{examplebox}

\paragraph{Retain Case: High-Fidelity Knowledge Retention Without Semantic Drift}

In the full-retention scenario, GU faithfully preserves the original narrative structure, factual details, and reasoning quality, showing no unintended erasure or drift. SimNPO, however, hallucinates unrelated settings and elements, demonstrating that baseline unlearning can disrupt even non-targeted knowledge. GU achieves precise retention, preserving both accuracy and style.

\begin{examplebox}{Case Study 4: High-Fidelity Knowledge Retention Without Semantic Drift}
    \textbf{Subject:} Guillermo Navarro Munoz \\
    \textbf{Question:} What can you tell me about ``Desert Shadows in the Neva'' by Guillermo Navarro Munoz?
    
    \tcbline
    \textbf{Ground Truth (Synthetic):} \\
    \textit{``Desert Shadows in the Neva'' by Guillermo Navarro Munoz is a captivating novel that transcends geographical boundaries, depicting the poignant journey of a Chilean protagonist in the historical realm of St. Petersburg, Russia.} \\
    \textbf{Base Model (No Forget):} \\
    \textit{``Desert Shadows in the Neva'' by Guillermo Navarro Munoz is a captivating novel that transcends geographical boundaries, depicting the poignant journey of a Chilean protagonist in the historical realm of St. Petersburg, Russia.} \\
    \textbf{SimNPO (Baseline):} \\
    \textit{``Desert Shadows in the Neva'' by Guillermo Navarro Munoz is a captivating novel set against the backdrop of the Baltic coast, portraying the lives of Russian emigres and the allure of the mysterious and intriguing local women.} \\
    \textbf{Observation:} .
    
    \vspace{0.2cm}
    
    \textbf{GU (Ours):} \\
    \textit{``Desert Shadows in the Neva'' by Guillermo Navarro Munoz is a captivating novel that transcends geographical boundaries, portraying the poignant journey of a Chilean protagonist in the historical realm of St. Petersburg, Russia.} \\
    \textbf{Observation:}\\
    In scenarios where knowledge must be fully retained, GU faithfully preserves the original narrative structure and semantic content—including cross-regional storytelling, the St. Petersburg setting, and the Chilean protagonist. It exhibits: no unintended erasure, no semantic drift, and intact reasoning and summarization abilities.
In contrast, SimNPO shows severe deviation, incorrectly relocating the story to the Baltic coast and introducing irrelevant elements (e.g., ``local women'', ``Russian émigrés''). This represents a typical case of hallucination after unlearning, indicating that baseline methods can disrupt not only the targeted forgetting region but also the surrounding knowledge that should have remained intact.
In this case, GU maintains both factual accuracy and stylistic consistency, achieving “forgetting what must be forgotten and preserving what must be preserved,” thereby demonstrating high-precision, low-side-effect selective unlearning.
\end{examplebox}

\paragraph{Summary of qualitative trends.}
Across all four cases, our qualitative analysis demonstrates that GU achieves precise, selective, and low-side-effect unlearning. In forget scenarios, GU fully removes the parametric memory of the synthetic facts without leaking structural patterns, while preserving topic coherence and linguistic fluency. In retain scenarios, GU avoids over-forgetting and maintains high-fidelity semantic content, narrative structure, and reasoning ability, whereas baseline methods frequently distort or hallucinate non-target knowledge. Together, these results highlight GU's ability to ``forget what must be forgotten and preserve what must be preserved'', achieving reliable privacy protection without compromising the model’s utility.

\end{document}